\PassOptionsToPackage{dvipsnames,table}{xcolor}
\documentclass{article}

\usepackage[utf8]{inputenc}
\usepackage[T1]{fontenc}
\usepackage{lmodern}
\usepackage{microtype}
\usepackage{natbib}
\usepackage{amsmath, amssymb, amsthm}
\usepackage{booktabs}
\usepackage{graphicx}
\usepackage{caption}
\usepackage{subcaption}
\usepackage{wrapfig}
\usepackage{multirow}
\usepackage{hyperref}
\hypersetup{colorlinks=true,linkcolor=MidnightBlue,citecolor=MidnightBlue,urlcolor=MidnightBlue}
\usepackage{url}
\usepackage{algorithm}
\usepackage{algorithmic}
\usepackage{enumitem}
\setlist[itemize]{topsep=2pt,itemsep=1pt,parsep=0pt,partopsep=0pt}
\setlist[enumerate]{topsep=2pt,itemsep=1pt,parsep=0pt,partopsep=0pt}
\usepackage{minted}

\usepackage[preprint]{icml2025/icml2025}

\usepackage[capitalize,noabbrev]{cleveref}
\usepackage{xspace}
\crefname{equation}{}{}  
\crefname{theorem}{Theorem}{Theorems}
\Crefname{theorem}{Theorem}{Theorems}
\crefname{proposition}{Proposition}{Propositions}
\Crefname{proposition}{Proposition}{Propositions}

\def\*#1{\mathbf{#1}}
\newcommand{\RR}{\mathbb{R}}
\newcommand{\E}{\mathbb{E}}
\DeclareMathOperator{\mathspan}{span}
\DeclareMathOperator{\vecop}{vec}
\DeclareMathOperator{\rank}{rank}
\DeclareMathOperator*{\argmin}{arg\,min}
\DeclareMathOperator{\tr}{tr}

\DeclareRobustCommand{\psilora}{\textsc{PSI-LoRA}\xspace}

\DeclareRobustCommand{\svdlora}{\textsc{SVDLoRA}\xspace}
\DeclareRobustCommand{\lorsum}{\textsc{LoRSum}\xspace}
\DeclareRobustCommand{\scaledlorsum}{\textsc{F-LoRSum}\xspace}
\newcommand{\metricpow}{\gamma}

\usepackage[colorinlistoftodos,textsize=scriptsize]{todonotes}

\theoremstyle{plain}
\newtheorem{theorem}{Theorem}[section]
\newtheorem{proposition}{Proposition}
\newtheorem{remark}[theorem]{Remark}

\icmltitlerunning{Beyond SGD, Without SVD}

\begin{document}

\twocolumn[
\icmltitle{Beyond SGD, Without SVD:\\ Proximal Subspace Iteration LoRA with Diagonal Fractional K-FAC}

\icmlsetsymbol{equal}{*}

\begin{icmlauthorlist}
\icmlauthor{Abdulla Jasem Almansoori}{mbzuai}
\icmlauthor{Maria Ivanova}{ysda}
\icmlauthor{Andrey Veprikov}{mbzuai,sbai,mirai,brain}
\icmlauthor{Aleksandr Beznosikov}{mirai,brain}
\icmlauthor{Samuel Horv\'{a}th}{mbzuai}
\icmlauthor{Martin Tak\'{a}\v{c}}{mbzuai}
\end{icmlauthorlist}

\icmlaffiliation{mbzuai}{MBZUAI, Abu Dhabi, UAE}
\icmlaffiliation{ysda}{Yandex School of Data Analysis, Moscow, Russia}
\icmlaffiliation{sbai}{SB AI Lab, Moscow, Russia}
\icmlaffiliation{mirai}{Moscow Independent Research Institute of Artificial Intelligence (MIRAI), Moscow, Russia}
\icmlaffiliation{brain}{Basic Research of Artificial Intelligence Laboratory (BRAIn Lab), Moscow, Russia}

\icmlcorrespondingauthor{Abdulla Jasem Almansoori}{abdulla.almansoori@mbzuai.ac.ae}

\icmlkeywords{Optimization, LoRA, Low-rank, alternating least squares, SVD, K-FAC, Natural Gradient, Memory-efficient}

\vskip 0.3in
]

\printAffiliationsAndNotice{\icmlEqualContribution} 

\begin{abstract}
Low-Rank Adaptation (LoRA) fine-tunes large models by learning low-rank updates on top of frozen weights,
dramatically reducing trainable parameters and memory \citep{hu2022lora}.
In this work, we address the gap between training with full steps with low-rank projections (\svdlora) and LoRA fine-tuning.
We propose \lorsum, a memory-efficient subroutine that closes this gap for gradient descent
by casting LoRA optimization as a proximal sub-problem and solving it efficiently with alternating least squares updates,
which we prove to be an implicit block power method.
We recover several recently proposed preconditioning methods for LoRA as special cases,
and show that \lorsum can also be used for updating a low-rank momentum.
In order to address full steps with \emph{preconditioned} gradient descent,
we propose a scaled variant of \lorsum that uses structured metrics such as K-FAC and Shampoo,
and we show that storing the diagonal of these metrics still allows them to perform well while remaining memory-efficient.
Experiments on a synthetic task, CIFAR-100, and language-model fine-tuning on GLUE, SQuAD v2, and WikiText-103,
show that our method can match or improve LoRA baselines given modest compute overhead,
while avoiding full-matrix SVD projections and retaining LoRA-style parameter efficiency.\footnote{
    Code: \url{https://github.com/zeligism/PSI-LoRA}
}
\end{abstract}

\section{Introduction}
\label{sec:intro}
Fine-tuning large pretrained models remains costly
as updating all parameters demands substantial GPU memory and state tracking.
Parameter-efficient methods like LoRA replace full fine-tuning with low-rank adapters
$\*U\in\RR^{d_\text{out}\times r}$, $\*V\in\RR^{d_\text{in}\times r}$
added to a frozen base weight $\*W^\text{base}$ so that $\*W^\text{tuned} = \*W^\text{base} + \*U\*V^\top$
with $r\ll\min(d_\text{out},d_\text{in})$ \citep{hu2022lora}.
Although LoRA reduces parameter and memory costs, first-order training of $(\*U,\*V)$
can be ill-conditioned \citep{tong2021accelerating} and sensitive to hyperparameters.
Preconditioning with $\*V^\top\*V$ and $\*U^\top\*U$ remedies part of this
\citep{zhang2024riemannian,zhang2023preconditioned,zhang2021preconditioned},
yet each step still acts only locally on the current factor subspaces.

As an oracle baseline, consider taking a full step in weight space $\*W \gets \*W + \Delta$,
and then projecting it back to rank $r$ w.r.t.\ the Frobenius norm.
We refer to this baseline as \svdlora because by Eckart-Young-Mirsky theorem,
the optimal rank-$r$ projection is given by the truncated SVD of $\*W + \Delta$.
While this does not guarantee maximal objective decrease for a non-quadratic loss,
it cleanly isolates the optimization gap introduced by enforcing rank $r$ after a full update.
Indeed, this has been recently proposed as an initialization scheme for LoRA-One \citep{zhang2025one},
so one can imagine that reapplying this projection each step would yield the best possible LoRA performance.
But this is clearly costly in terms of both compute and memory
since it requires running an SVD every step on a $d_\text{out} \times d_\text{in}$ matrix.
Nonetheless, the existence of the rank-$r$ updates themselves---the difference between the current weights
and their rank-$r$ SVD projection---suggests that we can do better.

In this work, we propose \lorsum (Low-Rank Sum), a stable and memory-efficient subroutine
for finding the best rank-$r$ solution to a sum of low-rank matrices.
The utility of \lorsum lies in its ability to approximate the above SVD baseline without forming or operating on the full matrix,
which is inspired from the observation that the gradient $\*G$ in LoRA admits a memory-efficient factorization
based on the input and output gradients of the layer.
The derivation of \lorsum is simple and follows from casting the SVD of an update as a proximal sub-problem in LoRA parameter space.
The solution uses alternating least squares (ALS) to approximate the solution iteratively,
which we prove to be a block power method implicitly.
In addition to approximating the full update,
\lorsum can also be used for updating low-rank approximations of the momentum buffers efficiently.
We further show that \lorsum can recover many existing methods in the literature
\citep{tong2021accelerating,zhang2021preconditioned,zhang2023preconditioned,xu2023power,zhang2024riemannian} as special cases.

When the step $\Delta$ is not expressible as a sum of low-rank terms,
such as in preconditioned gradient descent like Adam \citep{kingma2017adammethodstochasticoptimization}
or natural gradient descent \citep{amari1998naturalgradientmethod},
the standard \lorsum is no longer directly applicable.
In order to address this important case, we propose \scaledlorsum (Fisher-metric \lorsum),
which is a memory-efficient scaled variant that uses Kronecker-factored metrics
like K-FAC \citep{martens2020optimizingneuralnetworkskroneckerfactored}
and Shampoo \citep{gupta2018shampoopreconditionedstochastictensor} to scale the updates.
In general, Kronecker-factored metrics are not memory-efficient for LoRA training since they require storing
full $d_\text{in}\times d_\text{in}$ and $d_\text{out}\times d_\text{out}$ matrices.
However, our implementation only requires storing their diagonal approximations,
which ultimately makes \scaledlorsum's memory storage requirements even less than Adam's.
Indeed, \scaledlorsum with rank-$1$ momentum and diagonal metrics needs an extra memory of $2 (d_\text{in} + d_\text{out})$,
whereas Adam requires an extra memory of $2r (d_\text{in} + d_\text{out})$ for the first and second moments.
Compare with the memory-efficient AdaFactor \citep{shazeer2018adafactoradaptivelearningrates} which requires
$(d_\text{in} + d_\text{out})$ extra memory for the second moment.
Still, we show in the experiments that, not only does \scaledlorsum remain memory-efficient,
it can even outperform full-tuned Adam baseline on some GLUE tasks \citep{wang2019gluemultitaskbenchmarkanalysis}.

\paragraph{Contributions.}
We make the following contributions:
\begin{enumerate}
    \item \lorsum: a stable and memory-efficient subroutine for approximating the best proximal rank-$r$ sum of low-rank matrices,
    which we used to approximate the SVD a full update and full momentum. We show that \lorsum recovers existing preconditioned LoRA methods as special cases,
    and prove in \cref{thm:subspace-iteration} that \lorsum is a subspace iteration method
    that converges to \svdlora as the number of inner iterations $K$ increases, as shown in \cref{fig:linear}.
    \item \scaledlorsum: a \lorsum variant for preconditioned steps with Kronecker-factored metrics.
    \item \psilora and Scaled \psilora, two practical and memory-efficient optimizers based on the subroutines \lorsum and \scaledlorsum, with full reference implementations in \cref{app:algo}.
    \item Show empirical gains and learning rate robustness over LoRA baselines on linear synthetic task, CIFAR-100, GLUE, SQuAD v2, and WikiText-103.
\end{enumerate}

\paragraph{Setup and notation.}
For a single layer with frozen $\*W^\text{base} \in \RR^{d_\text{out}\times d_\text{in}}$,
we learn rank-$r$ adapters $\*U\in\RR^{d_\text{out}\times r}$ and $\*V\in\RR^{d_\text{in}\times r}$,
so that the effective tuned weight at iteration $t$ is $\*W_t^\text{tuned} = \*W^\text{base} + \*U_t \*V_t^\top$.
Henceforth, we make the dependence on $\*W^\text{base}$ implicit as it does not affect our analysis.
We define the low-rank matrix formed by the adapter as $\*W_t := \*U_t \*V_t^\top$.

Let $f(\*W_t)$ be the objective and define the full gradient
$\*G_t := \nabla_{\*W} f(\*W_t)$, which is also equal to the gradient w.r.t.\ $\*W_t^\text{tuned}$.
By the chain rule, $\*G_{\*U,t} = \nabla_{\*U} f(\*W_t) = \*G_t\*V_t$
and $\*G_{\*V,t} = \nabla_{\*V} f(\*W_t)=\*G_t^\top\*U_t$.
Fix a linear layer, and let $\*X_t \in \RR^{B\times d_\text{in}}$ be the inputs and
$\*S_t = \nabla_{(\*W\*X)^\top} f(\*W) \in \RR^{B\times d_\text{out}}$ be the gradients w.r.t.\ its outputs, where $B$ is the effective batch size.
Then, $\*G_t = \*S_t^\top\*X_t$. We exploit this factorized structure to perform only thin GEMM operations.

\section{Related Work}
\label{sec:related}

\noindent\textbf{Preconditioned optimization for fixed-rank parameterizations.}
Prior work improves the conditioning of low-rank factor optimization by preconditioning factor gradients,
leading to updates that depend on $\*V^\top\*V$ and $\*U^\top\*U$ \citep{tong2021accelerating,zhang2021preconditioned,zhang2023preconditioned,xu2023power}.
Riemannian preconditioned LoRA derives analogous updates from fixed-rank manifold geometry \citep{zhang2024riemannian}.
We instead view LoRA training as approximately solving the rank-$r$ projection of a full (possibly preconditioned) step.
One simultaneous iteration recovers these methods, while additional alternating iterations refine toward the best rank-$r$ direction.

\noindent\textbf{Projection and SVD-based low-rank updates.}
Projecting a dense update back to rank $r$ via truncated SVD (our \svdlora oracle)
is the optimal Frobenius-norm rank-$r$ projection, but it is typically too expensive for large layers.
This motivates iterative low-rank approximation (e.g., subspace iteration) and incremental SVD updates \citep{BRAND200620}.
Our \lorsum can be seen as a warm-started subspace-iteration approximation that uses only thin operations and exploits gradient factorization.
Related in spirit, GaLore projects gradients to a low-dimensional subspace \citep{zhao2024galore}.
We instead target the rank-$r$ projection of a \emph{full step}, with $K$ controlling projection accuracy.

\noindent\textbf{LoRA variants, reparameterizations, and initialization.}
A number of works modify LoRA or its training protocol to improve efficiency or generalization.
AdaLoRA adapts rank allocation using SVD-inspired criteria \citep{zhang2023adalora},
while DoRA reparameterizes weights into magnitude and direction and adapts both components \citep{liu2024dora}.
Other methods periodically merge adapters into the base weights or change the adapter schedule \citep{lialin2023relora}.
Several approaches focus on improving optimization through initialization or local geometry.
PiSSA initializes LoRA adapters using leading singular directions of pretrained weights \citep{meng2024pissa},
and LoRA-Pro uses a higher-dimensional tangent space to derive improved updates \citep{wang2024lorapro}.
Particularly close in motivation is LoRA-One. It uses an SVD-based initialization associated with projecting a full step (\svdlora)
and then applies preconditioned factor updates \citep{zhang2025one}.
Our contribution is complementary because rather than changing the adapter or relying on periodic SVD resets,
we provide a memory-efficient subroutine that approximates the rank-$r$ projected step itself.

\noindent\textbf{Optimizer state under rank constraints and Fisher-structured scaling.}
Maintaining optimizer state (e.g.\ momentum) in low-rank training is nontrivial
because naive momentum accumulates past gradients that generally do not lie in the current factor subspaces.
LoFT addresses this by projecting momentum-like quantities onto iterates' subspaces,
which can restrict momentum to joint subspaces across time \citep{tastan2025loftlowrankadaptationbehaves}.
Recent work also proposes maintaining explicit low-rank factorizations of momentum matrices \citep{mahdavinia2025lowrankmomentumfactorizationmemory}.
In our approach, the same \lorsum routine used for projected steps can also be used to maintain a low-rank estimate of momentum in Frobenius norm.
Moreover, we incorporate curvature information through Fisher/Kronecker-structured metrics.
K-FAC approximates the natural-gradient metric using Kronecker-factored Fisher blocks \citep{heskes2000onnaturallearning,martens2020optimizingneuralnetworkskroneckerfactored},
and Shampoo uses Kronecker-structured second moments with fractional matrix powers \citep{gupta2018shampoopreconditionedstochastictensor}.
We endow our projection subproblem with such metrics (and practical diagonal approximations), yielding scaled variants of \lorsum and \psilora.

\noindent\textbf{How our work is positioned.}
Overall, we provide a reusable, SVD-free projection subroutine for approximating full steps and low-rank state updates within standard LoRA fine-tuning.

\section{Without SVD: Proximal Subspace Iteration LoRA}
\label{sec:method}

In this section, we derive \lorsum,
a memory-efficient subroutine for approximating the best rank-$r$ projection of a full step,
and describe \psilora (Proximal Subspace Iteration LoRA),
a LoRA optimizer that uses \lorsum to approximate \svdlora steps and low-rank momentum.

\subsection{Projected gradient view and the \svdlora oracle}
\label{sec:proj-gd-view}

Let $\*W_t := \*U_t \*V_t^\top$ denote the LoRA at the current step.
We can derive the ideal gradient descent in adapter space as a projected gradient step
\begin{equation}
    \label{eq:proj-gd}
    \begin{aligned}
    \*W_{t+1}^\mathrm{svd}
    &=
    \argmin_{\rank(\*W) \leq r}\,
    \langle \*G_t,\, \*W \rangle
    + \frac{1}{2 \eta} \| \*W - \*W_t \|_\mathrm{F}^2
    \\
    &=
    \argmin_{\rank(\*W) \leq r}\,
    \frac{1}{2} \| \*W - (\*W_t - \eta \*G_t) \|_\mathrm{F}^2
    \\
    &=
    \Pi_r(\*W_t - \eta \*G_t),
    \end{aligned}
\end{equation}
where $\Pi_r(\cdot)$ denotes the best rank-$r$ approximation in Frobenius norm.
By Eckart-Young-Mirsky, $\Pi_r(\*W)$ is given by the rank-$r$ truncated SVD of $\*W$.
We refer to \cref{eq:proj-gd} as the \svdlora baseline
and consider steps taken by any optimizer in general, e.g., Adam or momentum SGD.

\begin{proposition}[SVDLoRA projection of a full step]
    \label{prop:best-rank-r-step}
    For any matrix $\bar{\*W}_{t+1} := \*W_t + \Delta$, the Frobenius-norm best rank-$r$ approximation
    $\Pi_r(\bar{\*W}_{t+1}) \in \argmin_{\rank(\*W)\le r}\|\*W-\bar{\*W}_{t+1}\|_\mathrm{F}$
    is given by the rank-$r$ truncated SVD of $\bar{\*W}_{t+1}$.
\end{proposition}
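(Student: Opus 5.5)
The statement is the Eckart--Young--Mirsky theorem applied to the particular matrix $\bar{\*W}_{t+1}=\*W_t+\Delta$. The plan is to prove it for an arbitrary matrix $\*M:=\bar{\*W}_{t+1}\in\RR^{d_\text{out}\times d_\text{in}}$; nothing about the structure $\*W_t+\Delta$ is used. Write the full SVD $\*M=\sum_{i=1}^{p}\sigma_i \*u_i\*v_i^\top$ with $p=\min(d_\text{out},d_\text{in})$, $\sigma_1\ge\cdots\ge\sigma_p\ge0$, and set $\*M_r:=\sum_{i\le r}\sigma_i\*u_i\*v_i^\top$. Since $\rank(\*M_r)\le r$ and $\|\*M-\*M_r\|_\mathrm{F}^2=\sum_{i>r}\sigma_i^2$, it remains to show that every $\*W$ with $\rank(\*W)\le r$ satisfies $\|\*M-\*W\|_\mathrm{F}^2\ge\sum_{i>r}\sigma_i^2$.

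The key tool is Weyl's inequality for singular values, $\sigma_{i+j-1}(\*A+\*B)\le\sigma_i(\*A)+\sigma_j(\*B)$. I would derive it from the Courant--Fischer min-max characterization: $\sigma_k(\*C)=\min_{\dim\ker\text{-constraint}}\max\|\*C\*x\|$ over subspaces of codimension $k-1$, and intersecting the constraint subspaces for $\*A$ and $\*B$ gives the bound. Apply it with $\*A=\*M-\*W$, $\*B=\*W$ and $j=r+1$. Since $\sigma_{r+1}(\*W)=0$, this yields $\sigma_{i+r}(\*M)\le\sigma_i(\*M-\*W)$ for all $i\ge1$ with $i+r\le p$. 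Squaring and summing over $i$ gives
\[
\|\*M-\*W\|_\mathrm{F}^2=\sum_i\sigma_i(\*M-\*W)^2\ge\sum_{i=1}^{p-r}\sigma_{i+r}(\*M)^2=\sum_{i>r}\sigma_i^2 .
\]
Hence $\*M_r$ attains the minimum, i.e. $\*M_r\in\argmin_{\rank(\*W)\le r}\|\*W-\bar{\*W}_{t+1}\|_\mathrm{F}$. When $\sigma_r=\sigma_{r+1}$ the minimizer is not unique, which is why the proposition writes ``$\in$''. In that case every choice of truncated SVD still attains the same value.

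The main obstacle is establishing Weyl's inequality cleanly. The min-max argument is standard: pick $\*x$ in the intersection of $\ker$-complements of the top $i-1$ right singular vectors of $\*A$ and the top $j-1$ right singular vectors of $\*B$. This intersection has codimension at most $i+j-2$, so by min-max $\sigma_{i+j-1}(\*A+\*B)\le\|(\*A+\*B)\*x\|\le\sigma_i(\*A)+\sigma_j(\*B)$ for unit $\*x$. Alternatively, since this is a classical theorem, the paper may simply cite Eckart--Young--Mirsky. I would present the short self-contained argument above.
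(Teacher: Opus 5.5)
Your argument is correct. The paper gives no separate proof of this proposition. It presents the statement as a direct instance of the Eckart--Young--Mirsky theorem applied to $\bar{\mathbf{W}}_{t+1}$, and as you observe, nothing about the structure $\mathbf{W}_t+\Delta$ is used; the text simply invokes that theorem.

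Your route proves the Frobenius-norm case of Eckart--Young--Mirsky itself. You use Weyl's singular-value inequality, $\sigma_{i+r}(\mathbf{M})\le\sigma_i(\mathbf{M}-\mathbf{W})+\sigma_{r+1}(\mathbf{W})=\sigma_i(\mathbf{M}-\mathbf{W})$, and then sum squares. This makes the result self-contained. It also explains explicitly why the statement writes ``$\in$'': the minimizer is non-unique when $\sigma_r=\sigma_{r+1}>0$, whereas if both vanish, $\bar{\mathbf{W}}_{t+1}$ itself is the unique minimizer. The paper, for its purposes, needs only the citation.

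There is one quantifier slip in your derivation of Weyl's inequality. The min-max characterization bounds $\sigma_{i+j-1}(\mathbf{A}+\mathbf{B})$ by the \emph{maximum} of $\|(\mathbf{A}+\mathbf{B})\mathbf{x}\|$ over unit $\mathbf{x}$ in the intersection subspace. It does not bound it by $\|(\mathbf{A}+\mathbf{B})\mathbf{x}\|$ for every such $\mathbf{x}$; that fails, for example, when $\mathbf{x}$ lies in the kernel of $\mathbf{A}+\mathbf{B}$. There are two easy fixes:
\begin{itemize}
\item take $\mathbf{x}$ to be the maximizer; or
\item use the max-min form, intersecting additionally with the span of the top $i+j-1$ right singular vectors of $\mathbf{A}+\mathbf{B}$. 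Its dimension exceeds the codimension $i+j-2$, so you obtain a unit $\mathbf{x}$ with $\sigma_{i+j-1}(\mathbf{A}+\mathbf{B})\le\|(\mathbf{A}+\mathbf{B})\mathbf{x}\|\le\sigma_i(\mathbf{A})+\sigma_j(\mathbf{B})$.
\end{itemize}
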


We know that $\*W_t$ is low-rank by construction,
so our goal is to approximate $\Pi_r(\*W_t + \Delta)$ given a step $\Delta$
without forming $\Delta$ as a dense $d_\text{out}\times d_\text{in}$ matrix or running a truncated SVD each step.

\subsection{Approximating rank-$r$ projections via warm-started ALS}
In the gradient descent case, $\bar{\*W}_{t+1} = \*W_t + \Delta$ admits a memory-efficient representation
since $\Delta = -\eta \*G_t = -\eta \*S_t^\top \*X_t$.
Thus, we can write $\bar{\*W}_{t+1} = \*U_t \*V_t^\top - \eta \*S_t^\top \*X_t$ as a sum of two low-rank matrices.

We now consider a \emph{regularized} low-rank projection of $\bar{\*W}_{t+1}$
the standard rank-$r$ factorized space $(\*U, \*V)$
\begin{equation}
    \min_{\*U,\*V}
    \quad
    \frac{1}{2} \left\|
        \*U \*V^\top - \bar{\*W}_{t+1}
    \right\|_\mathrm{F}^2
    + \mathcal{R}(\*U, \*V).
    \label{eq:main-subproblem}
\end{equation}
When $\mathcal{R} \equiv 0$, the optimal $\*U\*V^\top$ equals $\Pi_r(\bar{\*W}_{t+1})$.
The solution is not unique since $\*U\*V^\top = (\*U\*A)(\*V\*A^{-1})^\top$ for any invertible $\*A$.
Choosing the solution that minimizes $\frac{1}{2} (\| \*U \|_\mathrm{F}^2 + \| \*V \|_\mathrm{F}^2)$
gives the balanced factors $(\*U_r \*\Sigma_r^{1/2}, \*V_r \*\Sigma_r^{1/2})$
of the truncated SVD \citep{recht2010nuclearnormminimization},
which are unique (up to right orthogonal transformations).
Note that this is \emph{different} from setting
$\mathcal{R}(\*U, \*V) = \frac{1}{2} (\| \*U \|_\mathrm{F}^2 + \| \*V \|_\mathrm{F}^2)$ in \cref{eq:main-subproblem},
which leads to the singular value shrinkage operator \citep{cai2010singularvaluethresholding}.

We next introduce a ridge regularization on the factors to obtain a well-defined fixed-point map
and connect to existing preconditioned LoRA updates.
In practice we find ridge regularization alone can be unstable due to factor scaling,
so we ultimately use a proximal step for stability.

Thus, we consider \cref{eq:main-subproblem} with the following choice of regularizer
\begin{equation}
    \mathcal{R}^\text{fro}(\*U,\*V)
    = \frac{\lambda}{2} \big( \|\*U\|_\mathrm{F}^2 + \|\*V\|_\mathrm{F}^2 \big),
    \label{eq:fro-regularizer}
\end{equation}
where $\lambda > 0$.
The first-order conditions of \cref{eq:main-subproblem} yield the coupled equations
\begin{equation}
    \begin{aligned}
    \*U
    &= \bar{\*W}_{t+1}\*V(\*V^\top\*V+\lambda\*I)^{-1},
    \\
    \*V
    &= \bar{\*W}_{t+1}^\top\*U(\*U^\top\*U+\lambda\*I)^{-1}.
    \end{aligned}
    \label{eq:psilora-ideal-update}
\end{equation}
These define a fixed-point map that we approximately solve by a small number of inner iterations.

\paragraph{Gauss-Seidel (alternating) inner iterations.}
Initialize $\*U^{(0)}=\*U_t$, $\*V^{(0)}=\*V_t$ and for $k=0,\dots,K-1$ run the alternating updates\footnote{
    The order of the updates can have a difference depending on the conditioning of $\*U$ and $\*V$ (e.g., at LoRA initialization).
    In our experiments, we adopt the order as shown.
}
\begin{equation}
    \begin{aligned}
        \*U^{(k+1)} &\gets \bar{\*W}_{t+1}\*V^{(k)}\Big((\*V^{(k)})^\top\*V^{(k)}+\lambda\*I\Big)^{-1},
        \\
        \*V^{(k+1)} &\gets \bar{\*W}_{t+1}^\top\*U^{(k+1)}\Big((\*U^{(k+1)})^\top\*U^{(k+1)}+\lambda\*I\Big)^{-1}.
    \end{aligned}
    \label{eq:psilora-alternating-update}
\end{equation}
Similar iterations have been used for low-rank matrix completion \citep{hastie2015matrixcompletionlowranksvd}.
We can also consider a Jacobi variant that updates $\*U^{(k+1)}$ and $\*V^{(k+1)}$ from $(\*U^{(k)},\*V^{(k)})$.

\paragraph{Jacobi (simultaneous) iterations and preconditioned LoRA.}
Consider a single simultaneous fixed-point iteration uses $(\*U_t,\*V_t)$ on the right-hand side of \cref{eq:psilora-ideal-update}
\begin{equation}
    \begin{aligned}
    \*U_{t+1} &\gets \bar{\*W}_{t+1}\*V_t(\*V_t^\top\*V_t+\lambda\*I)^{-1},\\
    \*V_{t+1} &\gets \bar{\*W}_{t+1}^\top\*U_t(\*U_t^\top\*U_t+\lambda\*I)^{-1}.
    \end{aligned}
    \label{eq:psilora-simultaneous-update}
\end{equation}
When $\bar{\*W}_{t+1}=\*U_t \*V_t^\top -\eta\*G_t$ and $\lambda$ is a small damping term,
dropping the mild shrinkage factors yields the familiar preconditioned factor updates
\begin{equation}
    \begin{aligned}
    \*U_{t+1} &\approx \*U_t - \eta\,\*G_{\*U,t}\,(\*V_t^\top\*V_t+\lambda\*I)^{-1},
    \\
    \*V_{t+1} &\approx \*V_t - \eta\,\*G_{\*V,t}\,(\*U_t^\top\*U_t+\lambda\*I)^{-1},
    \end{aligned}
    \label{eq:psilora-approx-simultaneous-update}
\end{equation}
which corresponds to ScaledGD($\lambda$) \citep{tong2021accelerating,xu2023power}
and Riemannian preconditioned LoRA (with $\lambda$ used as practical damping) \citep{zhang2024riemannian}.
PrecGD uses a similar update but with adaptive damping instead \citep{zhang2021preconditioned},
but we do not consider it for simplicity.
However, see \cref{sec:gradient-clipping} for an interesting connection.

The form \cref{eq:psilora-approx-simultaneous-update} is particularly appealing and easy to implement
since $\*G_{\*U,t}$ and $\*G_{\*V,t}$ are already computed by automatic differentiation (autograd)
and it only requires inverting small $r \times r$ matrices.
In addition to the fact that \cref{eq:psilora-approx-simultaneous-update} is an approximation,
extending \cref{eq:psilora-simultaneous-update} to multiple inner iterations via autograd alone is not possible.
The full gradients $\*G_t = \*S_t^\top \*X_t$ are not stored by autograd,
so we need backward hooks to save $\*S_t$ and $\*X_t$ in order to compute quantities such as $\*G_t {\*V^{(k+1)}}$,
whereas a normal backward pass can only compute $\*G_t {\*V_{t}} = \*G_t {\*V^{(0)}}$.

\subsection{\lorsum as proximal rank-$r$ projections.}
In practice, we found that the factor regularization \cref{eq:fro-regularizer} can sometimes be unstable and leads to diverging updates.
In order to control the updates themselves, we consider a \emph{proximal} regularizer w.r.t.\ $(\*U_t, \*V_t)$ instead
\begin{equation}
    \mathcal{R}_\text{prox}(\*U, \*V)
    = \frac{\rho}{2} (
        \left\| \*U - \*U_t \right\|_\mathrm{F}^2
        + \left\| \*V - \*V_t\right\|_\mathrm{F}^2
    ).
    \label{eq:prox-regularizer}
\end{equation}
Then, the subproblem \cref{eq:main-subproblem} under the proximal regularizer becomes
\begin{equation}
    \begin{aligned}
        \argmin_{\*U, \*V} \
        &\tfrac{1}{2} \| {\*U \*V^\top} - ({\*U_t \*V_t^\top} - \eta \*G_t) \|^2_\mathrm{F}
        \\&\quad
        + \tfrac{\rho}{2} \| {\*U} - {\*U_t} \|^2_\mathrm{F}
        {+ \tfrac{\rho}{2} \| \*V - \*V_t \|^2_\mathrm{F} }
        \\
        =
        \argmin_{\*U, \*V} \
        &\langle \*U \*V^\top, \*G_t \rangle_\mathrm{F}
        + \tfrac{1}{2 \eta} \| {\*U \*V^\top} - {\*U_t \*V_t^\top} \|^2_\mathrm{F}
        \\&\quad
        + \tfrac{\rho}{2\eta} \| { \*U} - {\*U_t} \|^2_\mathrm{F}
        {+ \tfrac{\rho}{2\eta} \| \*V - \*V_t \|^2_\mathrm{F} }
        .
    \end{aligned}
    \label{eq:proximal-subproblem}
\end{equation}
This construction can offer two interpretations:
(i) a \emph{proximal rank-$r$ projection} of the full step $\*W_{t+1} = \*W_t - \eta \*G_t$,
where the proximal term penalizes large deviations in LoRA space from the current step $(\*U_t, \*V_t)$,
and (ii) a proximal update in a lifted space $(\*W, \*U, \*V)$ on the linearized objective
$f(\*W) \approx f(\*W_t) + \langle \*G_t, \*W - \*W_t \rangle$,
under the \emph{non-linear} constraint $\*W = \*U \*V^\top$.
See \cref{app:proximal-interpretation} for further discussion.

\paragraph{Proximal ALS updates.}
Given the proximal regularizer \cref{eq:prox-regularizer},
the Gauss-Seidel iterations that approximately solve \cref{eq:proximal-subproblem} are given by
\begin{equation}
    \begingroup\scriptsize
    \begin{aligned}
    \*U^{(k+1)}
    &\gets
    \big(
        \bar{\*W}_{t+1}\*V^{(k)} + \rho \, \*U_t
    \big)\Big(
        (\*V^{(k)})^\top\*V^{(k)} + \rho\*I
    \Big)^{-1},
    \\
    \*V^{(k+1)}
    &\gets
    \big(
        \bar{\*W}_{t+1}^\top\*U^{(k+1)} + \rho \, \*V_t
    \big)\Big(
        (\*U^{(k+1)})^\top\*U^{(k+1)} + \rho\*I
    \Big)^{-1}.
    \end{aligned}
    \endgroup
    \label{eq:proximal-alternating-updates}
\end{equation}
The additional proximal terms ($\rho \*U_t,\rho \*V_t$) can seem like a small improvement,
but we found them to be important for the stability of the updates.

\paragraph{Low-Rank Sum (\lorsum)}
More generally, suppose we are given the next step as a sum of low-rank matrices
\begin{equation*}
    \bar{\*W}_{t+1} = \sum_{j=1}^n c_j\,\*U_j\*V_j^\top,
\end{equation*}
and we want a rank-$r$ approximation represented by factors $(\*U,\*V)$, warm-started at $(\*U_1,\*V_1)$.
We define \lorsum as (approximately) solving the proximal projection
\begin{equation*}
    \min_{\*U,\*V}\;
    \frac{1}{2}\|\*U \*V^\top - \bar{\*W}_{t+1}\|_\mathrm{F}^2
    + \frac{\rho}{2} \Big(
        \|\*U - \*U_1 \|_\mathrm{F}^2 + \|\*V - \*V_1\|_\mathrm{F}^2
    \Big),
\end{equation*}
with $K$ Gauss-Seidel ALS iterations, yielding the same updates in \cref{eq:proximal-alternating-updates} but with
$\bar{\*W}_{t+1} = \sum_{j=1}^n c_j\,\*U_j\*V_j^\top$.
These updates define our \lorsum subroutine, written as
$\*W_{t+1} = \lorsum(\sum_{j=1}^{n} c_j \*U_{j,t} \*V_{j,t}^\top;\, K, \rho)$.
We show a simplified procedure in \cref{algo:lorsum-short}.
For the full details, please refer to \cref{app:algo}.

The careful reader may note that by stacking
$[\*U_{1,t+1} \ldots \*U_{n,t+1} ] [\*V_{1,t+1} \ldots \*V_{n,t+1} ]^\top$,
we can instead perform a low-rank modification to a previously-obtained SVD \citep{BRAND200620},
which requires one SVD at initialization and then thin operations afterwards.
However, the benefit of introducing \lorsum is that we can reuse the efficient subroutine in \cref{eq:psilora-alternating-update}
to sum low-rank matrices under two practical knobs:
$K$ for controlling approximation accuracy and $\rho$ for controlling update stability.
Both of these properties are desirable for low-rank updates to be fast and stable enough for deep learning applications.

\begin{algorithm}[ht!]
\caption{\lorsum with proximal regularization}
\begin{algorithmic}[1]
    \STATE \textbf{Input:} Low-rank terms $\{c_i, \*U_i, \*V_i\}_{i=1}^{n}$, number of inner iterations $K$, and proximal regularizer $\rho > 0$.
    \STATE $\*U_1^{(0)} \gets \*U_1, \quad \*V_1^{(0)} \gets \*V_1$.
    \FOR{$k = 0, \cdots, K-1$}
        \STATE Update $\*U_1^{(k+1)}$ and $\*V_1^{(k+1)}$ according to \cref{eq:proximal-alternating-updates} with
        $\bar{\*W} = \sum_{j=1}^{n} c_j \*U_j \*V_j^\top$.
    \ENDFOR
\end{algorithmic}
\label{algo:lorsum-short}
\end{algorithm}

\paragraph{Subspace iteration.}
Let $\mathcal{P}_\*X := \*X (\*X^\top \*X)^{\dagger} \*X^\top$ be the projection onto the column space of $\*X$,
where ${}^\dagger$ denotes the Moore-Penrose pseudoinverse.
Consider the fixed-point equations in \cref{eq:psilora-ideal-update} with $\lambda = 0$,
and observe that by right-multiplying $\*U$'s update by $\*V^\top$ and left-multiplying $\*V$'s update by $\*U^\top$,
we can write them as projections
\begin{equation}
    \*U \*V^\top
    =
    \bar{\*W}_{t+1} \mathcal{P}_{\*V}
    =
    \mathcal{P}_{\*U} \bar{\*W}_{t+1}
    .
    \label{eq:psilora-alternating-projections}
\end{equation}
Under the alternating updates in \cref{eq:proximal-alternating-updates} with $\rho = 0$
(or equivalently \cref{eq:psilora-alternating-update} with $\lambda = 0$),
these projections can be seen as block power iterations on
$\bar{\*W}_{t+1}^\top \bar{\*W}_{t+1}$ and $\bar{\*W}_{t+1} \bar{\*W}_{t+1}^\top$.

\begin{theorem}[\lorsum as a warm-started subspace iteration]
    \label{thm:subspace-iteration}
    Consider the iterations \cref{eq:proximal-alternating-updates},
    where $\bar{\*W}_{t+1} = \*U_t^{(0)} \*V_t^{(0)} - \eta \*G_t$ and $\rho = 0$.
    Assume $\mathspan(\*V^{(0)})$ has nonzero overlap with the top-$r$ right singular subspace of $\bar{\*W}_{t+1}$.
    Let $\sigma_1 \geq \ldots \sigma_{\min\{d_\text{in}, d_\text{out}\}}$ be the singular values
    of $\bar{\*W}_{t+1}$ and let $\*W_{t+1}^\mathrm{svd} = \Pi_r(\bar{\*W}_{t+1})$ be its best rank-$r$ approximation.
    Then, for some constant $C > 0$ that depends on the initialization $\*V_t^{(0)}$, we have that
    \begin{equation*}
        \| \*U_t^{(k)} \*V_t^{(k)} -  \*W_{t+1}^\mathrm{svd}\|_\mathrm{F}^2
        \leq C \left( \frac{\sigma_{r+1}}{\sigma_{r}} \right)^{2k}.
    \end{equation*}
\end{theorem}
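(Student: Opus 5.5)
With $\rho=0$, the updates \cref{eq:proximal-alternating-updates} reduce to $\*U^{(k+1)} = \bar{\*W}\*V^{(k)}((\*V^{(k)})^\top\*V^{(k)})^{-1}$ and $\*V^{(k+1)} = \bar{\*W}^\top\*U^{(k+1)}((\*U^{(k+1)})^\top\*U^{(k+1)})^{-1}$, where we write $\bar{\*W} := \bar{\*W}_{t+1}$. If $\*V^{(k)}$ has full column rank, the Gram inverse only changes coordinates within each column space. Hence
\[
\mathspan(\*U^{(k+1)})=\mathspan(\bar{\*W}\*V^{(k)})
\quad\text{and}\quad
\mathspan(\*V^{(k+1)})=\mathspan(\bar{\*W}^\top\*U^{(k+1)}).
\]
Iterating, $\mathspan(\*V^{(k)}) = \mathspan\big((\bar{\*W}^\top\bar{\*W})^k\*V^{(0)}\big)$. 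This is exactly orthogonal (block power) iteration on $\bar{\*W}^\top\bar{\*W}$, whose eigenvalues are $\sigma_i^2$.

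The first step is to show that full column rank persists. Write $\bar{\*W}=\sum_i \sigma_i \*u_i\*v_i^\top$, and let $\*V_r$ and $\*V_\perp$ be the top-$r$ and remaining right singular vectors. The overlap assumption, read as $\*V_r^\top\*V^{(0)}$ being invertible (this is where $C$ will come from), gives that $\*V_r^\top(\bar{\*W}^\top\bar{\*W})^k\*V^{(0)} = \*\Sigma_r^{2k}\*V_r^\top\*V^{(0)}$ is invertible for all $k$, assuming $\sigma_r>0$. So every iterate has rank $r$ and the Gram inverses exist.

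The second step is the standard subspace-iteration bound. Define $\tan\theta_k := \|\*V_\perp^\top\*V^{(k)}(\*V_r^\top\*V^{(k)})^{-1}\|_2$. Then
\[
\tan\theta_k \le (\sigma_{r+1}/\sigma_r)^{2k}\tan\theta_0 .
\]
This is actually a rate of $(\sigma_{r+1}/\sigma_r)^{2k}$ for the angle, which is stronger than the claimed rate for the squared error.

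The third step, which I expect to be the main obstacle, converts the angle bound into an error bound on the product. The key observation is that after the $\*U$ update, $\*U^{(k+1)}\*V^{(k)\top} = \bar{\*W}\mathcal{P}_{\*V^{(k)}}$. After the $\*V$ update, $\*U^{(k+1)}\*V^{(k+1)\top} = \mathcal{P}_{\*U^{(k+1)}}\bar{\*W}$, as in \cref{eq:psilora-alternating-projections}. The statement's product $\*U^{(k)}\*V^{(k)\top}$ is the latter, so the error to bound is
\[
\|\mathcal{P}_{\*U}\bar{\*W}-\Pi_r(\bar{\*W})\|_\mathrm{F}^2 .
\]
Write $\mathcal{P}_{\*U}\bar{\*W}-\Pi_r(\bar{\*W}) = (\mathcal{P}_{\*U}-\mathcal{P}_{\*U_r})\bar{\*W} - (\*I-\mathcal{P}_{\*U_r})\mathcal{P}_{\*U}\bar{\*W}$, or more directly split $\bar{\*W}$ into its top-$r$ part and its tail. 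The error is then controlled by $\|\mathcal{P}_{\*U}-\mathcal{P}_{\*U_r}\|_\mathrm{F}\,\sigma_1$ plus the tail's projection $\|\mathcal{P}_{\*U}\*U_\perp\|\,\sigma_{r+1}$. Both pieces are $O(\sin\theta)$, where $\theta$ is the angle between $\mathspan(\*U^{(k)})$ and $\*U_r$. Since $\mathspan(\*U^{(k)})=\mathspan(\bar{\*W}\*V^{(k-1)})$, one more multiplication by $\bar{\*W}$ gives
\[
\tan\theta^{U}_{k} \le (\sigma_{r+1}/\sigma_r)\tan\theta_{k-1}
\le (\sigma_{r+1}/\sigma_r)^{2k-1}\tan\theta_0 .
\]
Squaring, the error is at most $c\,\sigma_1^2\sqrt r\,\tan^2\theta_0\,(\sigma_{r+1}/\sigma_r)^{4k-2}$. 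Since $\sigma_{r+1}/\sigma_r\le1$, this is bounded by $C(\sigma_{r+1}/\sigma_r)^{2k}$ for $k\ge1$, and the case $k=0$ is trivial by enlarging $C$. So the final constant is
\[
C = c\,r\,\sigma_1^2\max\{\tan^2\theta_0,\ \|\*W_t-\Pi_r(\bar{\*W})\|_\mathrm{F}^2\}.
\]
The delicate points are handling the $\sigma_r=\sigma_{r+1}$ case (the bound is then trivial) and making sure the Frobenius versus spectral conversions only cost factors of $r$.
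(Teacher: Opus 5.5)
Your proposal is correct and takes the same route as the paper. The paper observes that $\mathspan(\*V^{(k+1)})=\mathspan(\bar{\*W}^\top\bar{\*W}\*V^{(k)})$ (and symmetrically for $\*U$) and cites standard subspace-iteration convergence; you also fill in what the paper leaves to citation (persistence of full column rank, the tangent recursion, and the conversion through $\*U^{(k)}\*V^{(k)\top}=\mathcal{P}_{\*U^{(k)}}\bar{\*W}$), which even yields the sharper rate $(\sigma_{r+1}/\sigma_r)^{4k-2}$. Only small slips remain, none of which affect the argument: the displayed identity with the extra term $-(\*I-\mathcal{P}_{\*U_r})\mathcal{P}_{\*U}\bar{\*W}$ is false, since the error is exactly $(\mathcal{P}_{\*U}-\mathcal{P}_{\*U_r})\bar{\*W}$ and its bound $\sqrt{2r}\,\sigma_1\sin\theta^U_k$ is all you need; the squared bound carries a factor $r$ rather than $\sqrt r$; and to cover $k=0$ the constant should be $\max\{c\,r\,\sigma_1^2\tan^2\theta_0,\ \|\*W_t-\Pi_r(\bar{\*W})\|_\mathrm{F}^2\}$ rather than a multiple of $c\,r\,\sigma_1^2$.
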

The proof uses a standard argument for subspace iteration and is deferred to \cref{app:proofs} for brevity.

\subsection{Momentum with \lorsum}
Let $\mathcal{M}(\*G_\*U)$ and $\mathcal{M}(\*G_\*V)$ be the momentum buffers for the \emph{LoRA gradients},
and consider only $\mathcal{M}(\*G_\*U)$ for brevity.
A naive implementation is to accumulate the preconditioned gradients
\begin{equation*}
    \mathcal{M}_{t}^\text{naive}(\*G_\*U)
    := \*G_{\*U, t} (\*V_t^\top \*V_t)^{-1} + \alpha \mathcal{M}_{t-1}^\text{naive}(\*G_\*U).
\end{equation*}
This does not preserve the property in \cref{eq:psilora-alternating-projections} as the buffers contain stale inverse gram matrices.
An intuitive fix is to re-project the momentum buffers before preconditioning LoFT~\citep{tastan2025loftlowrankadaptationbehaves}.
\begin{align*}
    \mathcal{M}_{t}^\text{proj}(\*G_\*U)
    &:= \*G_{\*U, t} (\*V_t^\top \*V_t)^{-1}
    \\&\quad + \alpha \mathcal{M}_{t-1}^\text{proj}(\*G_\*U) \*V_{t-1}^\top \*V_t (\*V_t^\top \*V_t)^{-1}
    \\
    {\color{Brown} \mathcal{M}_{t}^\text{proj}(\*G_\*U) \*V_t^\top}
    &= (\*G_t + \alpha {\color{Brown} \mathcal{M}_{t-1}^\text{proj}(\*G_\*U) \*V_{t-1}^\top} )\mathcal{P}_{\*V_t}
    ,
\end{align*}
However, this iteratively projects the buffer onto the \emph{joint} subspace of all iterations.
In addition, it requires saving the previous iterates.

Instead of maintaining factor-space momentum like LoFT,
we propose to maintain momentum in the full weight space using \lorsum.
We initialize a zero \emph{rank-$r_m$ momentum matrix} as in LoRA
(to satisfy the warm-start condition in \cref{thm:subspace-iteration}),
and use the \lorsum subroutine to update it to get the best estimate under the Frobenius norm.
\begin{equation}
    \mathcal{M}_{t}^\text{lor} := \lorsum(\alpha \mathcal{M}_{t-1}^\text{lor}(\*G) + \*G_t).
\end{equation}
This has memory cost $O((d_\text{in} + d_\text{out}) r_m)$,
which is cheaper than saving previous iterates if we choose $r_m < r$.
We compare against LoFT's momentum mechanism in \cref{sec:exp} and show that our {\lorsum}-based momentum
can outperform \svdlora under large enough rank budgets.
We show in \cref{algo:psilora-step-short} the the \psilora step with \lorsum momentum.
More details are provided in \cref{app:algo}.

\begin{algorithm}[ht!]
\caption{\psilora step with \lorsum momentum}
\begin{algorithmic}[1]
    \STATE \textbf{Input:} LoRA weights $\*U,\*V$, batch size $B$, learning rate $\eta$, momentum $\alpha$,
    $K$ inner iterations, proximal regularizer $\rho > 0$.
    \STATE Save in forward: $\*X$ (inputs).
    \STATE Save in backward: $\*S$ (grads of outputs).
    \STATE $\*U \*V^\top \gets \lorsum(\*U\*V^\top - \eta \*S^\top \*X - \eta \alpha  \mathcal{M};\, K, \rho)$.
    \STATE $\mathcal{M} \gets \lorsum(\alpha \mathcal{M} + \*S^\top \*X;\, K, \rho)$.
\end{algorithmic}
\label{algo:psilora-step-short}
\end{algorithm}

\section{Beyond SGD: \scaledlorsum with Kronecker-factored metrics}
We now extend \lorsum to handle Kronecker-factored metrics.
Specifically, we consider the \emph{K-FAC} metric, which approximates the Fisher information matrix.
Thus, we call our method Fisher-metric \lorsum (\scaledlorsum).
The procedure is described in \cref{algo:scaled-psilora-step},
which implements a preconditioned \psilora step with \scaledlorsum.
For notational simplicity, \cref{algo:scaled-psilora-step} does not detail how to perform the multiplications efficiently.
We provide the full details \cref{app:algo}.

\subsection{\lorsum with Kronecker-factored metrics}
\label{sec:scaled-lorsum}
We can endow the proximal terms in \cref{eq:proximal-subproblem} with non-Euclidean metrics
that can recover K-FAC and Shampoo updates.
Let $\*D_\*U \succ 0$ and $\*D_\*V \succ 0$ define the weighted Frobenius norm
\begin{equation*}
    \| \*W \|_{\mathrm{UV}}^2
    := \langle \*W,\, \*D_\*U \*W \*D_\*V \rangle_\mathrm{F}.
\end{equation*}
Similarly, we can define $\| \*U \|_{\*U}^2 := \langle \*U, \*D_\*U \*U \rangle_\mathrm{F}$
and $\| \*V \|_{\*V}^2 := \langle \*V, \*D_\*V \*V \rangle_\mathrm{F}$.
By Cauchy-Schwarz, we have $\|\*U\*V^\top\|_{\mathrm{UV}} \le \|\*U\|_{\*U}\,\|\*V\|_{\*V}$,
so these norms are consistent.

To simplify derivations, we define the linear operator
\begin{equation*}
    D(\*W) := \*D_\*U \, \*W \, \*D_\*V,
\end{equation*}
so that $\|\*W\|_{\mathrm{UV}}^2 = \| D^{1/2}(\*W) \|_\mathrm{F}^2$.
Now, consider preconditioning a step $\Delta_t$,
e.g., $\Delta_t = -\eta (\*G_t + \alpha \mathcal{M}_{t-1})$,
so that
\begin{equation*}
    \tilde{\*W}_{t+1} = \*W_t + D^{-1}(\Delta_t) = \*W_t + \*D_\*U^{-1} \Delta_t \*D_\*V^{-1}.
\end{equation*}

The reason we call $\*D_\*U$ and $\*D_\*V$ \emph{Kronecker-factored metrics} is because of the following identity
\begin{equation*}
    \vecop(\*D_\*U \*W \*D_\*V)
    = (\*D_\*V^\top \otimes \*D_\*U) \vecop(\*W),
\end{equation*}
where $\otimes$ is the Kronecker product and $\vecop(\cdot)$ is the vectorization operator.
Thus, preconditioned gradient descent under the metric $\|\cdot\|_{\mathrm{UV}}$
is equivalent to standard gradient descent under the Kronecker-factored preconditioner
$\*D_\*V \otimes \*D_\*U$ for symmetric $\*D_\*U$ and $\*D_\*V$.
Indeed, the metric rank-$r$ projection of $\tilde{\*W}_{t+1}$ is
\begin{equation*}
    \Pi_r^{\mathrm{UV}}(\tilde{\*W}_{t+1})
    \in
    \argmin_{\rank(\*W)\le r}\|\*W - \tilde{\*W}_{t+1}\|_{\mathrm{UV}}^2,
\end{equation*}
and since multiplication by invertible matrices preserves rank, we directly get
\begin{equation*}
    \Pi_r^{\mathrm{UV}}(\tilde{\*W}_{t+1})
    =
    D^{-1/2}(\Pi_r(D^{1/2}(\tilde{\*W}_{t+1}))).
\end{equation*}
We can interpret metric projections as running \lorsum in whitened space
and then unwhitening the output factors.

Thus, we can write the effective proximal subproblem under the metric rank-$r$ projection as
\begin{equation}
    \begin{aligned}
    \min_{\*U,\*V} \quad
        &\langle \*U \*V^\top, \Delta_t \rangle_\mathrm{F}
        + \frac{1}{2} \|
            \*U \*V^\top - \*W_t
        \|^2_\text{UV}
        \\ &\qquad
        + \frac{\rho}{2} \|
            \*U - \*U_t
        \|^2_\text{U}
        + \frac{\rho}{2} \|
            \*V - \*V_t
        \|^2_\text{V}
    .
    \end{aligned}
    \label{eq:metric-proximal-subproblem}
\end{equation}
Note the the inner product is Euclidean.
Same as before, the first-order optimality conditions give fixed-point equations,
for which the Gauss-Seidel iterations will yield the following updates,
\begin{equation}
    \begin{aligned}
    \*U^{(k+1)}
    &=
    \big(
        \*U \*V^\top \*D_\*V \*V^{(k)}
        + \rho \*U
        + \*D_\*U^{-1} \Delta_t \*V^{(k)}
    \big)
    \\ &\quad \cdot
    \big(
    (\*V^{(k)})^\top \*D_\*V \*V^{(k)} + \rho \*I_r
    \big)^{-1},
    \\
    \*V^{(k+1)}
    &=
    \big(
        \*V \*U^\top \*D_\*U \*U^{(k+1)}
        + \rho \*V
        + \*D_\*V^{-1} \Delta_t^\top \*U^{(k+1)}
    \big)
    \\ &\quad \cdot
    \big(
    (\*U^{(k+1)})^\top \*D_\*U \*U^{(k+1)} + \rho \*I_r
    \big)^{-1}
    .
    \end{aligned}
    \label{eq:metric-psilora-alternating-update}
\end{equation}
We refer to this variant as \scaledlorsum (Fisher \lorsum) since we consider K-FAC,
which is an approximation of the Fisher information metric.

\paragraph{Diagonal K-FAC for memory-efficiency.}
Natural gradient descent \citep{amari1998naturalgradientmethod} is a powerful algorithm,
but it is often intractable for large models because computing the Fisher information matrix is prohibitively expensive.
Kronecker-factored approximations such as K-FAC \citep{martens2020optimizingneuralnetworkskroneckerfactored}
make this scaling practical and are widely used in practice.
The K-FAC statistics for $\*D_\*U$ and $\*D_\*V$ are given by
\begin{equation*}
    \begin{aligned}
    \*D_\*U
    &= \E[\*s_t \*s_t^\top] \approx \frac{1}{B} \*S_t^\top \*S_t,
    \\
    \*D_\*V
    &= \E[\*x_t \*x_t^\top] \approx \frac{1}{B} \*X_t^\top \*X_t,
    \end{aligned}
\end{equation*}
where $\*x_t$ is a random input vector and $\*s_t$ is the corresponding gradient w.r.t.\ the layer outputs.
For large models, storing dense $\*D_\*U$ and $\*D_\*V$ defeats the very purpose of using low-rank adapters.
We therefore use diagonal K-FAC metrics,
\begin{equation*}
    \*D_\*U \approx \mathrm{diag}(\*v_\*s),
    \quad
    \*D_\*V \approx \mathrm{diag}(\*v_\*x),
\end{equation*}
where $\*v_\*s\in\RR^{d_\text{out}}$ and $\*v_\*x\in\RR^{d_\text{in}}$
are exponential moving average (EMA) estimates of per-coordinate second moments computed from $\*S_t$ and $\*X_t$.
We can similarly use Shampoo-style statistics \citep{gupta2018shampoopreconditionedstochastictensor}
as we show in \cref{app:scaled-lorsum-algo}.

\paragraph{Fractional metrics.}
A nice property of Kronecker products is that they are closed under fractional powers,
i.e., $(\*D_\*V \otimes \*D_\*U)^\metricpow = \*D_\*V^\metricpow \otimes \*D_\*U^\metricpow$ for any $\metricpow > 0$.
In general, one can choose any $\metricpow \in (0,1]$, which interpolates between first-order updates and natural-gradient scaling.
Inspired by the success of Shampoo-style $\metricpow = 1/4$ powers \citep{shi2023pytorchshampoo},
we use a fractional power $\metricpow = \tfrac{1}{2}$ for K-FAC in our experiments.
In fact, this square-root K-FAC enjoys an interesting property:
the preconditioned gradient step has a Frobenius norm bounded by $\min\{\rank(\*S), \rank(\*X)\}$ up to damping.
This fact is easy to show, but we defer it to \cref{app:sqrt-kfac} for brevity.
We believe this can act as a form of soft gradient normalization,
which is likely the reason why we can use very large learning rates.

\begin{algorithm}[ht!]
\caption{Preconditioned \psilora step with diagonal K-FAC metrics}
\begin{algorithmic}[1]
    \STATE \textbf{Input:} LoRA weights $\*U,\*V$, batch size $B$, learning rate $\eta$, gradient and metric smoothing $(\beta_1, \beta_2)$, inner iterations $K$ , proximal regularizer $\rho > 0$, fractional metric power $\metricpow$, and scale damping $\delta$.
    \STATE Save during forward: $\*X \in \mathbb{R}^{B \times d_x}$ (inputs).
    \STATE Save during backward: $\*S \in \mathbb{R}^{B \times d_y}$ (gradient w.r.t.\ outputs).
    \STATE $\*D_\*V \gets \beta_2 \*D_\*V + (1 - \beta_2) \frac{1}{B} \mathrm{diag}(\*X^\top\*X)$,
    \STATE $\*D_\*U \gets \beta_2 \*D_\*U + (1 - \beta_2) \frac{1}{B} \mathrm{diag}(\*S^\top\*S)$.
    \STATE $\*U \*V^\top \gets \scaledlorsum(\*U\*V^\top - \eta (1 - \beta_1) \*G - \eta \beta_1  \mathcal{M}(\*G); \, (\*D_\*U + \delta \*I_{d_y})^{\metricpow}, (\*D_\*V + \delta \*I_{d_x})^{\metricpow};\, K, \rho)$.
    \STATE $\mathcal{M}(\*G) \gets \lorsum(\beta_1 \mathcal{M}(\*G) + (1 - \beta_1) \*G; \, K, \rho)$.
\end{algorithmic}
\label{algo:scaled-psilora-step-short}
\end{algorithm}

\section{Experiments}

\begin{table*}[ht!]
\centering
\caption{
    RoBERTa-base GLUE: best main metric (mean $\pm$ std).
    Learning rates $\eta$ is tuned over 3 values per method per task and method on ~20\% of the training budget.
    Then, the overall best $\eta$ is used to train on the full budget.
    See \cref{fig:glue_roberta} for full learning curves and $\eta$.
}
\label{tab:glue_roberta_best_metrics}
\begin{tabular}{lcccccc}
\toprule
Method & MNLI & QNLI & QQP & SST2 & COLA & STSB \\
\midrule
Full & \textbf{87.67{\scriptsize $\pm$ 0.19}} & \textbf{92.82{\scriptsize $\pm$ 0.06}} & \textbf{91.10{\scriptsize $\pm$ 0.06}} & \textbf{94.44{\scriptsize $\pm$ 0.08}} & \underline{62.00{\scriptsize $\pm$ 0.25}} & 90.51{\scriptsize $\pm$ 0.02} \\
SVDLoRA & 86.40{\scriptsize $\pm$ 0.48} & 92.13{\scriptsize $\pm$ 0.08} & 88.68{\scriptsize $\pm$ 0.03} & 94.09{\scriptsize $\pm$ 0.08} & \textbf{63.35{\scriptsize $\pm$ 0.65}} & \textbf{91.00{\scriptsize $\pm$ 0.03}} \\
\midrule
LoRA & 86.39{\scriptsize $\pm$ 0.04} & 91.70{\scriptsize $\pm$ 0.01} & 88.57{\scriptsize $\pm$ 0.08} & 93.75{\scriptsize $\pm$ 0.57} & 61.86{\scriptsize $\pm$ 0.73} & 90.56{\scriptsize $\pm$ 0.13} \\
RPLoRA & 85.67{\scriptsize $\pm$ 0.09} & 84.64{\scriptsize $\pm$ 8.91} & 87.06{\scriptsize $\pm$ 0.62} & 93.58{\scriptsize $\pm$ 0.00} & 58.74{\scriptsize $\pm$ 3.53} & 80.58{\scriptsize $\pm$ 2.17} \\
Proj. \psilora & 84.91{\scriptsize $\pm$ 0.03} & 90.46{\scriptsize $\pm$ 0.21} & 86.96{\scriptsize $\pm$ 0.11} & 93.35{\scriptsize $\pm$ 0.16} & 59.29{\scriptsize $\pm$ 0.17} & 89.93{\scriptsize $\pm$ 0.00} \\
Scaled \psilora & \underline{86.57{\scriptsize $\pm$ 0.06}} & \underline{92.18{\scriptsize $\pm$ 0.16}} & \underline{89.10{\scriptsize $\pm$ 0.07}} & \underline{94.15{\scriptsize $\pm$ 0.16}} & 61.41{\scriptsize $\pm$ 1.29} & \underline{90.83{\scriptsize $\pm$ 0.26}} \\
\bottomrule
\end{tabular}
\end{table*}

\label{sec:exp}
In all our experiments, linear layers are frozen and adapters are added.
The rest of the model parameters, if any, are trained normally.
We consider a linear task, image classification on CIFAR-100 \citep{krizhevsky2009learning},
GLUE text classification tasks \citep{wang2019gluemultitaskbenchmarkanalysis},
SQuAD v2 question answering tasks \citep{rajpurkar2018knowunanswerablequestions},
and language modeling on WikiText-103 \citep{merity2016pointer}.
We also ablate the parameters of Scaled \psilora on GLUE-MNLI to show its robustness.
All experiments use rank $r=8$ adapters, except CIFAR-100 which uses $r=16$.
We fine-tune query and value projection layers in GLUE and SQuAD tasks.
We implement all methods in PyTorch \citep{pytorch}
and use the Hugging Face's PEFT \citep{peft} and fine-tuning scripts as the base code for the language model LoRA experiments.

\textbf{Baselines.}
We consider the following tuning baselines:
Full-weight, LoRA \citep{hu2022lora},
Riemannian Preconditioned LoRA (RPLoRA) \citep{zhang2024riemannian,tong2021accelerating,zhang2021preconditioned},
\svdlora,
\psilora with \lorsum momentum (\cref{algo:psilora-step-short}),
\psilora with LoFT's projected momentum (Proj. \psilora) \citep{tastan2025loftlowrankadaptationbehaves},
and scaled \psilora with diagonal K-FAC metrics (\scaledlorsum).
We consider using both SGD or AdamW optimizers \citep{loshchilov2019decoupledweightdecay} for Full-weight, LoRA, RPLoRA, and \svdlora.
AdamW always uses the default parameters with zero weight decay unless otherwise stated.
For non-LoRA parameters, we use SGD with momentum for non-scaled \psilora and AdamW for Scaled \psilora.
Note that \svdlora maintains \emph{full-rank} optimization states, such as momentum or second moment in AdamW.
For \psilora and its scaled variant on CIFAR-100,
we consider different number of inner alternating iterations $K$ and use the notation "\psilora $\times K$".
For language model experiments, we consider simultaneous updates with $K=1$.

\textbf{Linear task.}
The linear task is as simple as \cref{eq:main-subproblem};
we consider the objective $\min_{\*U, \*V} \frac{1}{2} \| \*U \*V^\top - \*W \|_\mathrm{F}^2$
with $\*W \in \mathbb{R}^{600 \times 200}$.
The full-batch task uses SGD without momentum,
and mini-batch samples columns (batch size 64) and uses momentum $\alpha=0.75$.

In the full-batch case, \svdlora converges in one step with $\eta=1$, and we use the same $\eta$ for \psilora.
In \cref{fig:linear}, we show how \psilora converges to \svdlora as $K \to \infty$.
In the stochastic setting, \psilora outperforms \svdlora for small $K$,
and larger $K$ tracks \svdlora more closely but can plateau earlier.
This counter-intuitive behavior suggests that \svdlora's local optimality (\cref{prop:best-rank-r-step})
may not always translate to better generalization in stochastic settings,
despite maintaining full-rank momentum.

\begin{figure}[t]
    \centering
    \includegraphics[width=0.47\linewidth]{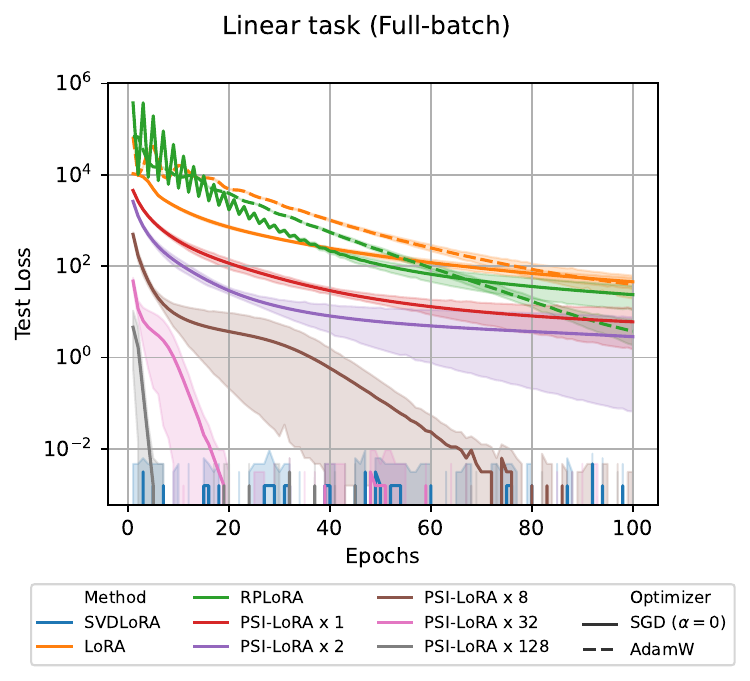}
    \includegraphics[width=0.52\linewidth]{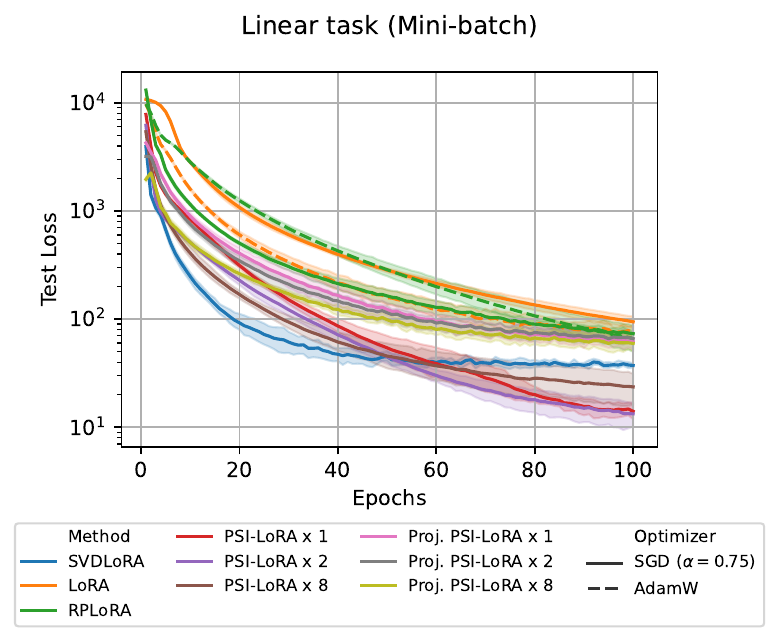}
    \caption{
        Linear task with full-batch gradients and no momentum (left) vs.\ mini-batch gradients with momentum (right).
        $\psilora \times K$ means $K$ alternating iterations.
    }
    \label{fig:linear}
\end{figure}

\textbf{CIFAR-100 task.}
For CIFAR-100, we use a custom MLP with one convolution layer that transforms the input into patches,
and then we process the patches using linear layers throughout.
Further, we initialize LoRAs with \svdlora of the base layers' weights, and then set the base layers to zero.
This task serves as a stress-test of LoRA methods and a proof of concept for \psilora.
We start noticing in this task that increasing the number of inner iterations $K$ does not necessarily improve performance.
Results are shown in \cref{tab:cifar100_results}.

\begin{table}
\centering
\caption{
    CIFAR-100 (PatchMLP, batch size 64):
    top test accuracy over time over 3 seeds.
    Bold is best, underlined is second best.}
\label{tab:cifar100_results}
\begingroup\small
\setlength{\tabcolsep}{5pt}
\renewcommand{\arraystretch}{1.05}
\begin{tabular}{lc}
\toprule
\textbf{Method} & \textbf{Top Accuracy} \\
\midrule
SVDLoRA (SGD) & 32.053{\scriptsize $\pm$ 0.179}  \\
Full (SGD) & \textbf{37.456}{\scriptsize $\pm$ 0.168}  \\
\midrule
LoRA (AdamW) & 31.393{\scriptsize $\pm$ 0.056} \\
RPLoRA (AdamW) & 32.895{\scriptsize $\pm$ 0.530} \\
\psilora x 1 & 30.143{\scriptsize $\pm$ 0.377}  \\
\psilora x 2 & 30.302{\scriptsize $\pm$ 0.305}  \\
\psilora x 8 & 30.472{\scriptsize $\pm$ 0.094} \\
Scaled \psilora x 1 & \underline{33.575}{\scriptsize $\pm$ 0.525} \\
Scaled \psilora x 2 & 32.739{\scriptsize $\pm$ 0.176} \\
Scaled \psilora x 8 & 31.678{\scriptsize $\pm$ 0.288}  \\
\bottomrule
\end{tabular}
\endgroup
\end{table}

\textbf{RoBERTa on GLUE.}
We demonstrate the feasibility of Scaled \psilora on a real-world sequence classification fine-tuning task.
Training is done via a tweaked fine-tuning script from Hugging Face.
We use RoBERTa \citep{liu2019robertarobustlyoptimizedbert}
for sequence classification on GLUE tasks \citep{wang2019gluemultitaskbenchmarkanalysis}.
We consider the tasks MNLI, QNLI, QQP, SST-2, CoLA, and STS-B.
We tune on ~20\% training budget (with a constant schedule) over 3 learning rates,
and pick the overall best performing hyper-parameters across all tasks.
The values in \cref{tab:glue_roberta_best_metrics} are reported for the same learning rate per method
(see \cref{fig:glue_roberta} for full learning curves and learning rates).
For example, for Scaled \psilora, we set
$\eta=0.2$, $\beta_1=0.9$, $\beta_2=0.99$, $\delta=10^{-5}$, $K=1$ (simultaneous), and $\rho=0.01$,
which we found to be fairly tranferable across other tasks as well,
where tuning $\beta_2$ can sometimes give further improvements.

The reason we set the same learning rate across tasks of varying sizes and difficulties
is to check the robustness of the optimizers and how much tuning is required in practice.
This is especially evident for CoLA, as the hyperparameters are not tuned specifically for it.
Also, RPLoRA's learning rate does not generalize well as can be seen from its large variance on QNLI, for example.

\textbf{T5 on SQuAD v2.}
We also show that Scaled \psilora perform well on a seq2seq task using T5 \citep{raffel2020exploringlimitstransfer}
fine-tuned on SQuAD v2 \citep{rajpurkar2018knowunanswerablequestions}.
We similarly tune over 3 learning rates per method and consider full training budget.
See \cref{app:exp} for more details.

\begin{table}[ht!]
\centering
\caption{
    T5-base SQuAD v2, tuned over 3 learning rates.
}
\label{tab:squad_t5_best_metrics}
\begingroup\small
\setlength{\tabcolsep}{4pt}
\renewcommand{\arraystretch}{1.05}
\begin{tabular}{lcccc}
\toprule
Method & $\eta$ & EM & F1 & Eval loss \\
\midrule
Ours & 0.5 & \textbf{60.80 {\scriptsize $\pm$ 0.9410}} & \textbf{62.41 {\scriptsize $\pm$ 1.02}} & \underline{0.7546 {\scriptsize $\pm$ 0.0019}} \\
Full & 1e-4 & \underline{56.57 {\scriptsize $\pm$ 3.64}} & 57.55 {\scriptsize $\pm$ 4.28} & \textbf{0.7145 {\scriptsize $\pm$ 0.0288}} \\
LoRA & 5e-4 & 56.52 {\scriptsize $\pm$ 1.60} & \underline{57.56 {\scriptsize $\pm$ 1.86}} & 0.7951 {\scriptsize $\pm$ 0.0144} \\
\bottomrule
\end{tabular}
\endgroup
\end{table}

\textbf{GPT-2 on WikiText-103.}
This is a language modeling task where we fine-tune GPT-2 \citep{radford2019language} on WikiText-103 \citep{merity2016pointer} with LoRA adapters.
We show that Scaled \psilora is much more robust than LoRA and SVDLoRA across a search over 3 learning rates, i.e., \{1e-4, 2e-4, 5e-4\} for LoRA and SVDLoRA, and \{0.5, 1.0, 2.0\} for Scaled \psilora.
Scaled \psilora achieves better perplexity no matter which learning rate we set.
The results are shown in \cref{fig:gpt2_wikitext}.

\begin{figure}[ht!]
    \captionsetup{font=small,justification=raggedright,singlelinecheck=false}
    \begin{minipage}[c]{0.62\linewidth}
        \centering
        \includegraphics[width=\linewidth]{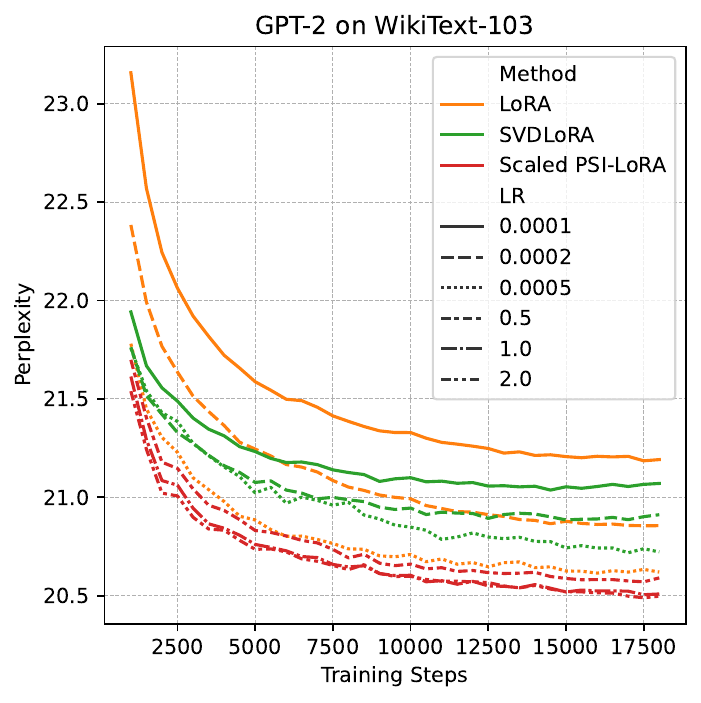}
    \end{minipage}\hfill
    \begin{minipage}[c]{0.35\linewidth}
        \caption{
            Perplexity of GPT-2 fine-tuning on WikiText-103 with LoRA adapters for 3 learning rates per method.
            Scaled \psilora is more robust to learning rate choice and achieves better perplexity for all learning rates.
        }
        \label{fig:gpt2_wikitext}
    \end{minipage}
\end{figure}

\textbf{Ablation of Scaled \psilora.}
In order to further stress-test Scaled \psilora
we test various hyperparameters on GLUE-MNLI given a fixed $(\beta_1, \beta_2) = (0.9, 0.99)$.
We compare Scaled \psilora with metric powers $\metricpow \in \{0.25, 0.5\}$,
use a K-FAC vs. Shampoo statistics, and consider damping $\delta \in \{10^{-1}, 10^{-3}, 10^{-5}, 10^{-8}\}$.
Results are shown in \cref{fig:ablation_scaled_psilora_glue_mnli}.
We observe that K-FAC performs best with $\metricpow = 0.5$ and lower damping $\delta \leq 10^{-5}$,
and it is quite robust to the choice of $\delta$.
Shampoo statistics perform well when $\metricpow = 0.25$,
but we do not see a clear advantage over K-FAC.
Most interestingly, we see that Scaled \psilora can perform well for a surprisingly wide range of learning rates,
ranging from $0.01$ to $10.0$.
This supports the claim that Scaled \psilora decreases the burden of learning rate tuning significantly.

\begin{figure}[ht!]
    \centering
    \includegraphics[width=0.45\textwidth]{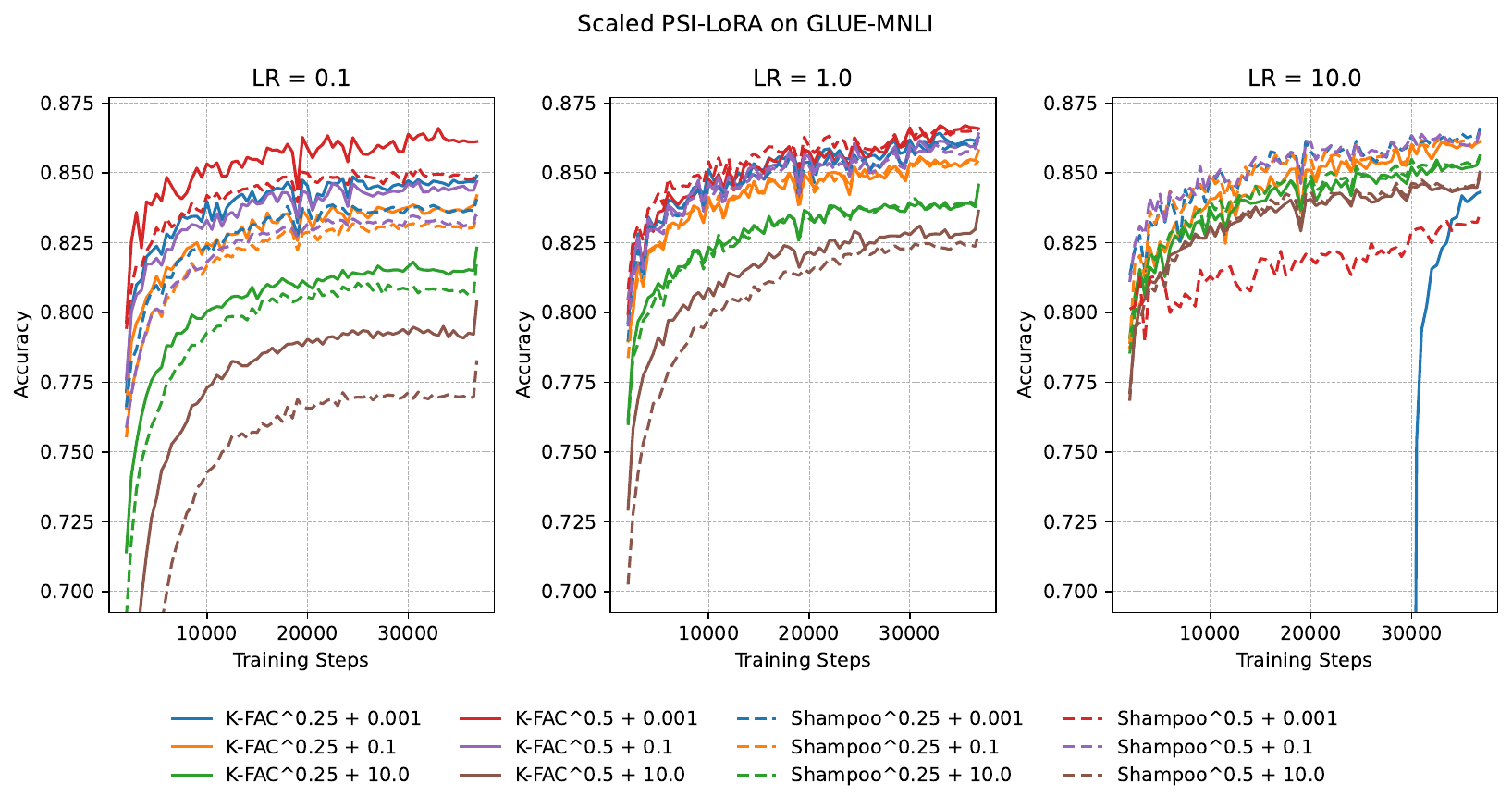}
    \includegraphics[width=0.45\textwidth]{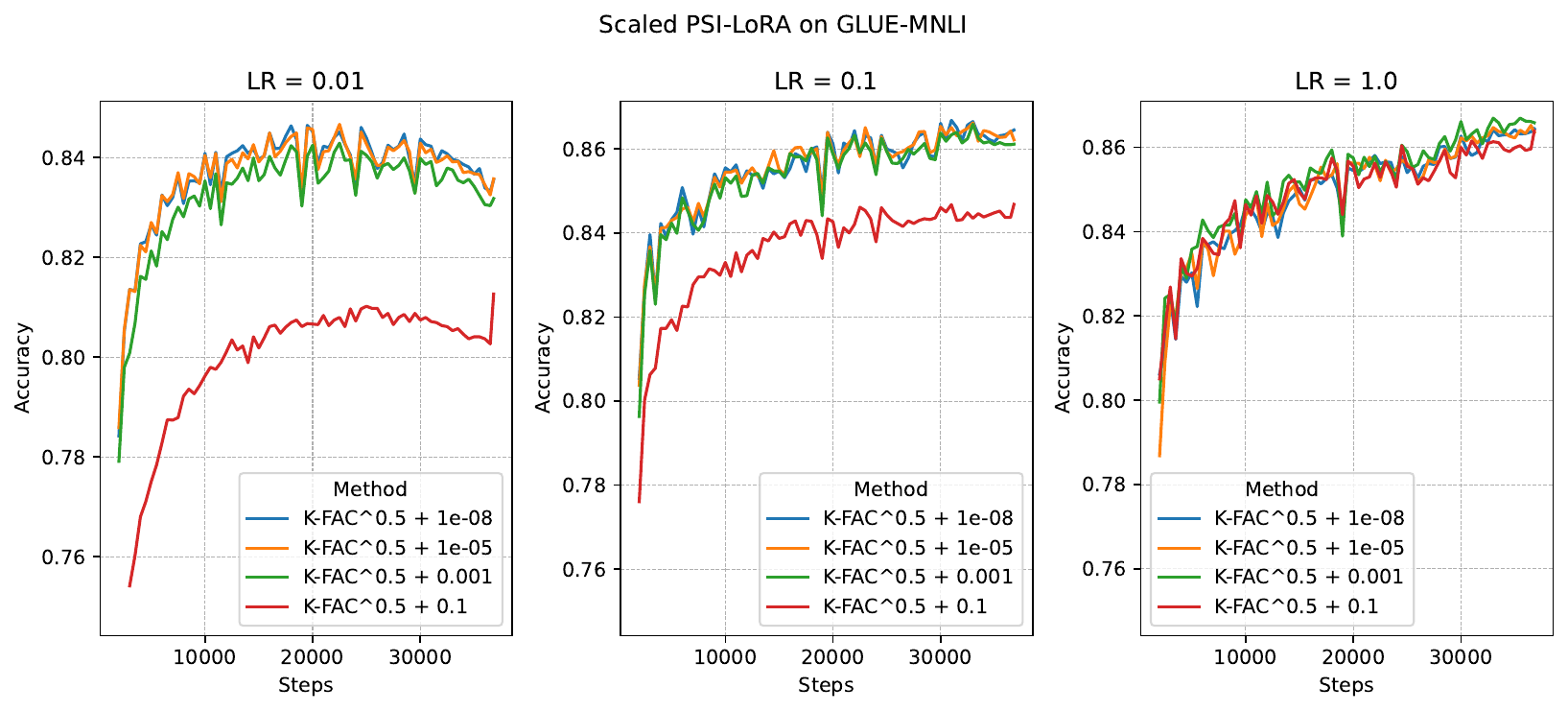}
    \caption{Ablation of Scaled \psilora on GLUE-MNLI.}
    \label{fig:ablation_scaled_psilora_glue_mnli}
\end{figure}

\section{Discussion and Limitations}
\label{sec:discussion}
\psilora trades extra per-step compute for avoiding full-matrix SVD projections,
while remaining robust across a wide range of learning rates and hyperparameters.
The main limitations are:
(i) overhead incurred from caching inputs and gradients w.r.t\ outputs and running inner iterations,
(ii) larger $K$ can degrade under stochastic gradients, even on simple tasks (\cref{fig:linear}), but this is also true for the \svdlora oracle,
and (iii) diagonal Kronecker metrics are memory-efficient but crude.

We have experimented with several directions to improve \psilora and \scaledlorsum that we leave to future work:
(i) using low-rank metrics instead of diagonal and updating them with \lorsum,
(ii) memory-efficient gradient accumulation with $\*X$ and $\*S$ and sketching using \lorsum,
(iii) mitigating the stochastic sensitivity with variance reduction \citep{cutkosky2019momentumbasedvariancereduction} or with more principled momentum schemes \citep{beck2009fast},
and (iv) reducing runtime with a distributed implementation \citep{shi2023pytorchshampoo}.


\section*{Impact Statement}
This paper presents work whose goal is to advance the field of Machine Learning.
There are many potential societal consequences of our work, none which we feel must be specifically highlighted here.


\bibliography{ref}
\bibliographystyle{icml2025/icml2025}

\newpage
\appendix
\onecolumn

\section{Additional implementation details}
\label{app:impl}

\subsection{\lorsum technical details}

\paragraph{Caching activations for inner iterations.}
To support $K>1$ inner iterations, we cache the exact layer inputs $\*X_t$ and output gradients $\*S_t$
that define the per-layer gradient $\*G_t=\*S_t^\top\*X_t$.
This is required because the inner iterations access products of the form
$\*G_t\*V^{(k)}=\*S_t^\top(\*X_t\*V^{(k)})$ and $\*G_t^\top\*U^{(k)}=\*X_t^\top(\*S_t\*U^{(k)})$ for updated factors.
With dropout, gradient clipping, loss scaling, or gradient accumulation,
$\*X_t$ and $\*S_t$ must correspond to the post-dropout / post-scaling quantities used in the backward pass.
Otherwise, recomputed products such as $\*G_t\*V^{(k)}$ become inconsistent with the gradient that autograd applied.
In our implementation, we apply clipping and loss-scaling consistently
to the cached $\*X_t$ and $\*S_t$, used by inner iterations and metric updates.

\paragraph{Mixed precision and loss scaling.}
When using AMP with dynamic loss scaling (e.g.\ \texttt{GradScaler}),
$\*S_t$ is produced in scaled units.
Before using cached activations for inner iterations or updating curvature statistics,
we unscale $\*S_t$ by the current scale factor so that $\*G_t=\*S_t^\top\*X_t$ matches the true gradient.
Equivalently, one can keep $\*S_t$ in scaled units \emph{as long as} the same convention is used consistently
for (i) the update term and (ii) any metric / second-moment statistics derived from $\*S_t$.
With gradient accumulation, $\*S_t$ should reflect the same averaging convention
as the final accumulated gradient (e.g., mean over accumulated mini-batches).

\paragraph{Choosing $\rho$ and $\delta$.}
We found that it is best to tune a reparameterization of the proximal parameter $\theta = \rho / \eta$.
In our experiments, we found that $\theta \in [10^{-3}, 10^{-2}]$
to be a good range across tasks and optimizers.
We also consider $\rho = \theta \eta_t$, where $\eta_t$ follows a learning rate schedule,
which also works even as $\eta_t \to 0$, but we set a lower bound on $\rho > 10^{-5}$ to avoid ill-conditioning in the inner systems.
Compare this with AdamW's weight decay \citep{loshchilov2019decoupledweightdecay}, which is also scaled by the learning rate.

We also use $\delta$ for metric damping in scaled variants, which is an important recipe for the stabilizing the K-FAC optimizer.
However, we found that our algorithm, with $\metricpow=1/2$, is surprisingly resilient to low damping values,
which is not the case when $\metricpow=1$.
We show the plots of the ablation experiments in \cref{fig:ablation_scaled_psilora_glue_mnli}.

\paragraph{Solving the inner systems.}
Since all solves / inverses involve symmetric positive definite matrices,
we use Cholesky factorization followed by triangular solves.
In case of failures, the solver falls back to more general and stabler solves.
However, we rarely observed such failures in practice that are not due to bad hyperparameter settings.
In general, we solve the $r\times r$ systems in FP32 without inversion.

\subsection{Gradient clipping}
\label{sec:gradient-clipping}

When using gradient clipping by global norm,
we considered applying the clipping factor to either $\*X_t$, $\*S_t$, split the scale evenly across both.
Importantly, we update curvature statistics with the \emph{clipped} factors $c^{a}\*X_t$ and $c^{1-a}\*S_t$ for some $a \in [0,1]$,
which noticeably improved performance.
This suggests that gradient clipping's usefulness is in robustifying samples of $\*X_t$ and $\*S_t$
and not only bounding updates within a trust-region.

We found that setting $a=1$, i.e., applying the clipping factor fully to $\*X_t$, worked best in practice,
which made sense since we found that $\|\*X_t\| \gg \|\*S_t\|$ typically
(even after accounting for the $1/B$ scaling in $\*S_t$ from averaging the loss).
However, this is also surprising because the gradient clipping, which we found to be crucial for stable training,
almost cancels out in the final update in when disabling EMA ($\beta_1=\beta_2=0$) and using $\metricpow=1/2$.

To see this, let us consider the clipped inputs $c \*X_t$ for $c \leq 1$
and set $\delta \approx 0$ and $\metricpow=1/2$.
Next, set the K-FAC statistics with these clipped inputs so that the metrics become
\begin{equation*}
    \*D_\*U = (\*S_t^\top \*S_t)^{1/2},
    \qquad
    \*D_\*V = c(\*X_t^\top \*X_t)^{1/2},
\end{equation*}
Take the \scaledlorsum iterations \cref{eq:metric-psilora-alternating-update}.
The shrinkage term will clearly cancel out the scaling in both updates.
However, taking the effective preconditioned steps $\tilde{\*G}_{\*U}$ and $\tilde{\*G}_{\*V}$, we have
\begin{equation*}
    \begin{aligned}
        \tilde{\*G}_{\*U}
        &=
        (\*S_t^\top \*S_t)^{-1/2} \*G_t \*V
        \big(
            \*V^\top (\*X_t^\top \*X_t)^{1/2} \*V + \rho {\color{Red} c^{-1}} \*I
        \big)^{-1}
        \\
        \tilde{\*G}_{\*V}
        &=  
        (\*X_t^\top \*X_t)^{-1/2} \*G_t^\top \*U
        \big(
            \*U^\top (\*S_t^\top \*S_t)^{1/2} \*U + \rho \*I
        \big)^{-1}
        .
    \end{aligned}
\end{equation*}
This interesting scale-invariance property follows from the half-Fisher preconditioning using K-FAC statistics,
which is easy to prove, as we show in \cref{app:sqrt-kfac}.
However, there is also an implicit increase of the proximal term by $c^{-1} \geq 1$ in the $\*U$ update,
which may offer a different perspective on choosing $\rho$ adaptively based on the gradient norm
since $c^{-1} \propto \| \nabla f(\*W_t)\| $ when it is active.

Indeed, such adaptive damping strategies have long been studied
under the lens of differential equations as Hessian-driven damping
\citep{alvarez2000minimizingpropertysecondorder,attouch2014dynamicalinertialforwardbackward,su2016differentialequationnesterovaccelerated},
and recently in the context of adaptive optimization methods but with $\sqrt{\|\nabla f(\*W_t)\|}$ instead
\citep{zhang2021preconditioned,mishchenko2023regularizednewton}.
However, we are not aware of connections between Hessian damping and gradient clipping in deep learning.

In general, our exact method does not fully cancel out the clipping effect
since typically we have $\beta_1, \beta_2 \geq 0.9$.
Nonetheless, the above analysis suggests that clipping has a more subtle effect
than just bounding the step size.

\subsection{Distributed data-parallel training}
\label{sec:ddp}

Let us assume that the loss is a mean over $B$ samples per step.
In data-parallel training with $R$ workers,
the effective step should then use the global mean gradient
\begin{equation*}
    \*G_t = \frac{1}{R}\sum_{r=1}^R \*G_{r,t},
    \qquad
    \*G_{r,t} = \*S_{r,t}^\top \*X_{r,t},
\end{equation*}
where $\*X_{r,t}$ and $\*S_{r,t}$ are the local inputs and output gradients on worker $r$.
Standard LoRA updates only require $\*G_t\*V_t$ and $\*G_t^\top\*U_t$,
which autograd computes and DDP synchronizes automatically.
However, our inner iterations require products with \emph{updated} factors,
$\*G_t\*V^{(k)}$ and $\*G_t^\top\*U^{(k)}$, which are not part of the autograd graph.

To obtain consistent inner-iteration updates across workers,
we compute these products locally and all-reduce the resulting thin matrices
\begin{equation*}
    \*G_t\*V^{(k)}
    =
    \frac{1}{R}\sum_{r=1}^R (\*S_{r,t})^\top(\*X_{r,t}\*V^{(k)}),
    \qquad
    \*G_t^\top\*U^{(k)} \;=\;
    \frac{1}{R}\sum_{r=1}^R (\*X_{r,t})^\top(\*S_{r,t}\*U^{(k)}).
\end{equation*}
Both all-reduced quantities have shapes $d_\mathrm{out}\times r$ and $d_\mathrm{in}\times r$,
so the communication cost is modest compared to synchronizing full gradients.
Note that this applies to \lorsum momentum updates as well.

\paragraph{Synchronizing diagonal Kronecker statistics.}
For scaled variants using diagonal Kronecker factors (e.g.\ diagonal K-FAC/Shampoo-style),
we also all-reduce the per-step diagonal statistics
(e.g.\ $\mathrm{diag}(\tfrac{1}{B}\*X_{r,t}^\top\*X_{r,t})$ and $\mathrm{diag}(\tfrac{1}{B}\*S_{r,t}^\top\*S_{r,t})$)
before updating the EMA.
Without this step, each worker solves a different projection subproblem and parameters may drift.

\begin{table}[t]
\centering
\caption{
    Per-layer optimizer state and dominant per-step costs (adapter rank $r$, effective batch size $B$).
    We assume that $r_m \leq r$ for \psilora momentum rank.
}
\label{tab:costs}
\begin{tabular}{lccc}
\toprule
Method & Extra state & Step cost & Dense \\
\midrule
LoRA + SGD & $r(d_\text{in}+d_\text{out})$ & $O((d_\text{in}+d_\text{out})r)$ & No \\
LoRA + AdamW & $2r(d_\text{in}+d_\text{out})$ & $O((d_\text{in}+d_\text{out})r)$ & No \\
LoFT & $2r(d_\text{in}+d_\text{out})$ & $O((d_\text{in}+d_\text{out})r^2)$ & No \\
Precond-LoRA & $r(d_\text{in}+d_\text{out})$ & $O((d_\text{in}+d_\text{out})r^2)$ & No \\
SVDLoRA + AdamW & $2d_\text{in}d_\text{out}$ & $O((d_\text{in}+d_\text{out})^2 r)$ & Yes \\
\psilora & $r_m(d_\text{in}+d_\text{out})$ & $O((d_\text{in}+d_\text{out})r^2 K)$ & No \\
Scaled \psilora & $(r_m+1)(d_\text{in}+d_\text{out})$ & $O((d_\text{in}+d_\text{out})r^2 K)$ & No \\
\bottomrule
\end{tabular}
\end{table}

\section{Proofs}
\label{app:proofs}

\subsection{Proof of \cref{thm:subspace-iteration}}
\label{app:subspace-iteration-proof}

\paragraph{Theorem \ref{thm:subspace-iteration} (restated).}
Consider the iterations \cref{eq:proximal-alternating-updates},
where $\bar{\*W}_{t+1} = \*U_t^{(0)} \*V_t^{(0)} - \eta \*G_t$ and $\rho = 0$.
Assume $\mathspan(\*V^{(0)})$ has nonzero overlap with the top-$r$ right singular subspace of $\bar{\*W}_{t+1}$.
Let $\sigma_1 \geq \ldots \sigma_{\min\{d_\text{in}, d_\text{out}\}}$ be the singular values
of $\bar{\*W}_{t+1}$ and let $\*W_{t+1}^\mathrm{svd} = \Pi_r(\bar{\*W}_{t+1})$ be its best rank-$r$ approximation.
Then, for some constant $C > 0$ that depends on the initialization $\*V_t^{(0)}$, we have that
\begin{equation*}
    \| \*U_t^{(k)} \*V_t^{(k)} -  \*W_{t+1}^\mathrm{svd}\|_\mathrm{F}^2
    \leq C \left( \frac{\sigma_{r+1}}{\sigma_{r}} \right)^{2k}.
\end{equation*}

\begin{proof}
The iterations \cref{eq:psilora-alternating-update} satisfy alternating projections
following \cref{eq:psilora-alternating-projections}, giving
\begin{equation}
    \begin{aligned}
    \mathspan (\*U_{t+1}^{(k+1)})
    = \mathspan (\bar{\*W}_{t+1} \*V_{t+1}^{(k)})
    = \mathspan (\bar{\*W}_{t+1} \bar{\*W}_{t+1}^\top \*U_{t+1}^{(k)}),
    \end{aligned}
\end{equation}
which is a block power iteration on $\bar{\*W}_{t+1} \bar{\*W}_{t+1}^\top$.
By symmetry, $\mathspan (\*V_{t+1}^{(k+1)}) = \mathspan (\bar{\*W}_{t+1}^\top \bar{\*W}_{t+1} \*V_{t+1}^{(k)})$,
which is a block power iteration on $\bar{\*W}_{t+1}^\top \bar{\*W}_{t+1}$.
This is exactly a subspace iteration method \citep{subspaceiteration,gu2014subspaceiterationrandomizationsingular},
so the convergence in Frobenius norm follows.
\end{proof}

\begin{remark}
    In subspace iteration, the iterates are orthonormalized,
    which amounts to taking the inverse square roots in \cref{eq:proximal-alternating-updates}.
    In other words, the multiplications $\bar{\*W}_{t+1} \bar{\*W}_{t+1}^\top$ and $\bar{\*W}_{t+1}^\top \bar{\*W}_{t+1}$ are explicitly done,
    whereas they are implicitly done via alternating ALS updates.
\end{remark}
\begin{remark}
    \label{rem:initialization}
    The assumption required for $\mathspan(\*V^{(0)})$ is mild in practice since we warm-start from the previous step's factors.
    Also, at LoRA initialization $\*V_0$ is a Gaussian matrix, which satisfies this assumption almost surely.
    This also suggests that the order we adopt in \cref{eq:psilora-alternating-update,eq:proximal-alternating-updates}
    is preferable for LoRA training, and that random initializations for $\*V$ can be used in other applications of \lorsum.
\end{remark}

\section{Interpretations of the proximal projection}
\label{app:proximal-interpretation}

This appendix records two complementary ways to view the proximal subproblem in \cref{eq:proximal-subproblem}.

\paragraph{(i) A stabilized approximation to the rank-$r$ projection.}
Let $\bar{\*W}_{t+1}=\*W_t-\eta\*G_t$ denote the full step before projection.
The proximal factor projection solves
\begin{equation*}
    \min_{\*U,\*V} \
    \frac{1}{2} \|\*U\*V^\top - \bar{\*W}_{t+1}\|_\mathrm{F}^2
    + \frac{\rho}{2} \big( \|\*U - \*U_t\|_\mathrm{F}^2 + \|\*V - \*V_t\|_\mathrm{F}^2 \big).
\end{equation*}
This interpretation follows by the construction of the sub-problem as a proximal approximation to the idealized rank-$r$ projection step.
When $\rho=0$ and the inner solve is exact and the objective reduces to
$\min_{\rank(\*W)\le r}\|\*W-\bar{\*W}_{t+1}\|_\mathrm{F}^2$,
whose solution is the truncated SVD projection $\Pi_r(\bar{\*W}_{t+1})$, i.e., the \svdlora oracle.
For $\rho>0$ and a finite number of ALS iterations,
the proximal terms select a particular factorization that stays close to $(\*U_t,\*V_t)$,
which improves stability in deep networks while still tracking the same rank-$r$ projection in the limit.

\paragraph{(ii) Lifted-space proximal step under a non-linear constraint.}
Define the lifted variable $\*z=\mathrm{vec}(\*W,\sqrt{\rho}\*U,\sqrt{\rho}\*V)$
and $\*z_t=\mathrm{vec}(\*W_t,\sqrt{\rho}\*U_t,\sqrt{\rho}\*V_t)$.
Then, the above problem is the proximal map of the linearized model of $f$ in the lifted space,
with the non-linear constraint $\*W=\*U\*V^\top$ enforced by restricting to feasible $(\*W,\*U,\*V)$.
We can write the proximal step as
\begin{equation*}
    \min_{\*z} \
    \langle \nabla_\*W f(\*z_t), \*W - \*W_t \rangle
    + \frac{1}{2\eta} \|\*z - \*z_t\|_2^2
    + \mathbf{1}_{\*W=\*U\*V^\top}(\*z)
    ,
\end{equation*}
where $\mathbf{1}_E(\*z)$ is the indicator function of the non-convex constraint $ E$ on $\*z$.
This interpretation connects to proximal methods for constrained optimization \citep{parikh2014proximalalgorithms},
where the non-convex constraint $\*W=\*U\*V^\top$ is handled implicitly by solving over the factorized variables.
This viewpoint also suggests possible extensions to other constraints on $\*W$ (e.g., sparsity, quantization) via proximal terms
in the lifted space, handling $f(\*z)$ directly without linearizing over $\*W$,
or considering relaxations of the non-convex constraint.

\section{Square-root K-FAC yields implicit gradient normalization}
\label{app:sqrt-kfac}

This section describes the property that \emph{half-Fisher} (square-root K-FAC) preconditioning
yields implicit gradient normalization in the single-layer case.

Let $\*G = \*S^\top\*X$ be a single-layer gradient.
Set the metrics $\*D_\*U = (\*S^\top\*S)^{1/2}$ and $\*D_\*V = (\*X^\top\*X)^{1/2}$.
Without loss of generality, set damping $\delta=0$, and consider the half-K-FAC metric update
\begin{equation*}
    \*W
    \gets
    \*W -\eta \*D_\*U^{-1} \*G \*D_\*V^{-1}
    =
    \*W - \eta (\*S^\top\*S)^{-1/2} \*S^\top \*X (\*X^\top\*X)^{-1/2}
    =
    \*W - \eta \hat{\*S}^\top \hat{\*X}
    ,
\end{equation*}
where we defined the orthonormalized factors
\begin{equation*}
    \hat{\*S} = \*S (\*S^\top\*S)^{-1/2},
    \qquad
    \hat{\*X} = \*X (\*X^\top\*X)^{-1/2},
\end{equation*}
so that $\hat{\*S}^\top \hat{\*S} = \*I$ and $\hat{\*X}^\top \hat{\*X} = \*I$.
This immediately implies that
\begin{equation*}
    \|\hat{\*S}\|_\mathrm{F}^2 = \rank(\hat{\*S}) = \rank(\*S),
    \qquad
    \|\hat{\*X}\|_\mathrm{F}^2 = \rank(\hat{\*X}) = \rank(\*X).
\end{equation*}
Furthermore, define the preconditioned gradient $\tilde{\*G} = \hat{\*S}^\top \hat{\*X}$.
Its squared Frobenius norm can be expressed as
\begin{equation*}
    \|\tilde{\*G} \|_\mathrm{F}^2
    =
    \tr(\hat{\*X}^\top \hat{\*S} \hat{\*S}^\top \hat{\*X})
    =
    \tr(\mathcal{P}_{\*S} \mathcal{P}_{\*X})
    \leq
    \min\{\rank(\*S), \rank(\*X)\}
    ,
\end{equation*}
where, typically, $\min\{\rank(\*S), \rank(\*X)\} \leq B$ for many modern linear layers
(since most layers have one side dimension larger than the batch size $B$).
This shows that half-metric (square-root) K-FAC preconditioning controls the step norm by subspace overlap,
rather than by the raw magnitudes of $\*X$ and $\*S$.

This property does not necessarily apply to the \scaledlorsum updates in \cref{eq:metric-psilora-alternating-update}
due to the way the metrics are applied (assuming $\rho=0$):
\begin{equation*}
    \begin{aligned}
    \*U^{(k+1)}
    &=
    \big(
        \*U \*V^\top \*D_\*V \*V^{(k)}
        + \*D_\*U^{-1} \Delta \*V^{(k)}
    \big)
    \big(
    (\*V^{(k)})^\top \*D_\*V \*V^{(k)}
    \big)^{-1}
    \\
    &=
    \big(
        \*U \*V^\top (\*X^\top \*X)^{1/2} \*V^{(k)}
        + \hat{\*S}^\top \*X \*V^{(k)}
    \big)
    \big(
    (\*V^{(k)})^\top (\*X^\top \*X)^{1/2} \*V^{(k)}
    \big)^{-1}
    ,
    \end{aligned}
\end{equation*}
so the soft normalization property does not hold \emph{exactly} unless
$(\*V^{(k)})^\top (\*X^\top \*X)^{1/2} \*V^{(k)} = \big( (\*V^{(k)})^\top \*X^\top \*X \*V^{(k)}\big)^{1/2}$,
which requires orthogonal $\*V^{(k)}$ and commuting $\*X^\top \*X$ and $\*V^{(k)} (\*V^{(k)})^\top$.
Nonetheless, we still observe stability using large learning rates with \scaledlorsum,
which suggests that the implicit normalization property of K-FAC is still approximately preserved in our updates.

\section{Algorithms and PyTorch reference implementation}
\label{app:algo}

This appendix collects algorithms and a minimal reference implementation in PyTorch \citep{pytorch}.
Note that the code itself does not need PyTorch's autograd for the updates,
but we use it for consistency and ease of integration with existing training pipelines.

In the paper, $\*W=\*U\*V^\top$ with $\*U\in\RR^{d_\mathrm{out}\times r}$ and $\*V\in\RR^{d_\mathrm{in}\times r}$.
In the reference code, we store factors as $(\*V^\top,\*U)$,
which makes sense when read as (\verb|weight_in|, \verb|weight_out|),
where the weights maintain their original shape in PyTorch's convention.
We maintain the same notation for the metrics, for example, $(\*D_\*V, \*D_\*U)$,
which is (\verb|input_metric|, \verb|output_metric|),
which flows naturally from left to right.

\begin{algorithm}[!ht]
\caption{\psilora step with low-rank momentum}
\label{algo:psilora-step}
\begin{algorithmic}[1]
    \STATE \textbf{Input:} current factors $(\*U_t,\*V_t)$, momentum buffer $\mathcal{M}_{t-1}$ (stored as low-rank factors),
    learning rate $\eta$, momentum $\alpha$, \lorsum hyperparameters $(K,\rho)$.
    \STATE Cache during forward: $\*X_t$ (layer inputs). Cache during backward: $\*S_t$ (output gradients).
    \STATE Form low-rank gradient representation: $\*G_t=\*S_t^\top\*X_t$ (unmaterialized).
    \STATE Form full step $\bar{\*W}_{t+1} = \*W_t - \eta(\*G_t+\alpha\mathcal{M}_{t-1})$ (unmaterialized).
    \STATE Project step with \lorsum:
    \begin{equation*}
      \*U_{t+1}\*V_{t+1}^\top
      = \*W_{t+1}
      \gets
      \lorsum(\bar{\*W}_{t+1}; \,K,\rho).
    \end{equation*}
    \STATE Form full momentum $\bar{\mathcal{M}}_{t} = \alpha \mathcal{M}_{t-1} + \*G_t$ (unmaterialized).
    \STATE Update low-rank momentum:
    \begin{equation*}
        \mathcal{M}_t \gets \lorsum(\bar{\mathcal{M}}_{t};\,K,\rho)
    \end{equation*}
\end{algorithmic}
\end{algorithm}

\begin{algorithm}[!ht]
\caption{\lorsum (proximal ALS iterations for rank-$r$ projection of a low-rank sum)}
\label{algo:lorsum}
\begin{algorithmic}[1]
    \STATE \textbf{Input:} low-rank factors and coefficients $\{c_j,\*U_j,\*V_j\}_{j=1}^n$,
    warm start $(\*U_1,\*V_1)$, inner iterations $K$, proximal parameter $\rho$.
    \STATE Initialize $\*U^{(0)}\gets \*U_1$, $\*V^{(0)}\gets \*V_1$.
    \STATE Define $\bar{\*W} = \sum_{j=1}^n c_j\*U_j\*V_j^\top$.
    \FOR{$k=0,\dots,K-1$}
        \STATE $\*U^{(k+1)} \gets \big(\bar{\*W}\*V^{(k)}+\rho\,\*U_1\big)\big((\*V^{(k)})^\top\*V^{(k)}+\rho \*I\big)^{-1}$.
        \STATE $\*V^{(k+1)} \gets \big(\bar{\*W}^\top\*U^{(k+1)}+\rho\,\*V_1\big)\big((\*U^{(k+1)})^\top\*U^{(k+1)}+\rho \*I\big)^{-1}$.
    \ENDFOR
    \STATE \textbf{Output:} $(\*U^{(K)},\*V^{(K)})$.
\end{algorithmic}
\end{algorithm}

\subsection{\lorsum and \psilora}
\label{app:lorsum-algo}
\lorsum and \psilora algorithms are summarized in \cref{algo:lorsum,algo:psilora-step}.
A minimal reference implementation of \lorsum is provided in \cref{lst:lorsum},
and an implementation for \scaledlorsum is provided in \cref{lst:scaled-lorsum}.
The algorithms contain dense-matrix notation for clarity (tagged with ``unmaterialized''),
but the implementation only uses thin-matrix operations
and never materializes full $d_\mathrm{out}\times d_\mathrm{in}$ matrices,
as shown in the reference code.

\paragraph{Momentum buffer update.}
We apply momentum at the level of the (approximate) full-rank update and then project back to rank $r_m$.
Concretely, we consider the full update matrix $\bar{\*W}_{t+1}=\*W_t-\eta(\*G_t+\alpha \mathcal{M}_{t-1})$
and project it with \lorsum, while separately updating $\mathcal{M}_t$ as a low-rank approximation of
$\*G_t+\alpha\mathcal{M}_{t-1}$.
This might seem redundant, but it is important since we do not separately project
gradient and momentum but rather together.
In other words, $\bar{\*W}_{t+1}$ is is represented as a sum of low-rank terms,
each of which is at most rank $r$, but collectively can potentially be of higher rank.
\lorsum can project this sum back to rank $r$ directly without materializing the full matrix.
We found this to be an important detail for achieving competitive performance and a distinguishing feature of \psilora.

For simplicity, we always set $r_m = r$, except in the linear task experiment \cref{fig:linear-momentum},
where we explicitly compare momentum rank budgets.

\begin{listing}[!ht]
\caption{\lorsum reference implementation}
\label{lst:lorsum}
\begin{minted}[fontsize=\footnotesize, linenos, breaklines]{python}
def lorsum(
    factors: list[tuple[torch.Tensor, torch.Tensor]],
    coefficients: list[float],
    num_iters: int = 1,
    rho: float = 0.0,
) -> tuple[torch.Tensor, torch.Tensor]:
    """
    factors are stored as (V_i^T, U_i) so that W = U @ V.
      - V_i^T: (r_i, d_in)
      - U_i:   (d_out, r_i)
    """
    assert len(factors) >= 2 and len(factors) == len(coefficients)
    solve = torch.linalg.solve

    Vt, U = factors[0]                    # anchor (V^T, U)
    Vt_k, U_k = Vt.clone(), U.clone()     # warm start
    r = Vt_k.shape[0]
    eye_r = torch.eye(r, device=Vt_k.device, dtype=Vt_k.dtype)

    for k in range(2 * num_iters):
        if k % 2 == 0:
            # update V^T
            rhs = Vt.mul(rho)
            for c_i, (Vt_i, U_i) in zip(coefficients, factors):
                rhs.add_((U_k.T @ U_i) @ Vt_i, alpha=c_i)
            Vt_k = solve(U_k.T @ U_k + eye_r.mul(rho), rhs)
        else:
            # update U
            rhs = U.mul(rho)
            for c_i, (Vt_i, U_i) in zip(coefficients, factors):
                rhs.add_(U_i @ (Vt_i @ Vt_k.T), alpha=c_i)
            U_k = solve(Vt_k @ Vt_k.T + eye_r.mul(rho), rhs.T).T

    return Vt_k, U_k
\end{minted}
\end{listing}

\subsection{\scaledlorsum and scaled \psilora}
\label{app:scaled-lorsum-algo}
Scaled \psilora step (\cref{algo:scaled-psilora-step}) is similar to \psilora step (\cref{algo:psilora-step}),
except that it uses exponential moving averages and \scaledlorsum for the update (not for the momentum).
We fix the hyperparameters $(K,\rho)$ to be the same for both operators for simplicity.

\begin{algorithm}[!ht]
\caption{Scaled \psilora step with low-rank momentum and diagonal K-FAC metric with fractional power}
\label{algo:scaled-psilora-step}
\begin{algorithmic}[1]
    \STATE \textbf{Input:} current factors $(\*U_t,\*V_t)$, low-rank momentum buffer $\mathcal{M}_{t-1}$,
    learning rate $\eta$, EMA smoothing of gradient and metrics $\beta_1$ and $\beta_2$, metric damping $\delta$,
    \lorsum hyperparameters $(K,\rho)$.
    \STATE Cache during forward: $\*X_t$ (layer inputs). Cache during backward: $\*S_t$ (output gradients).
    \STATE Form low-rank gradient representation: $\*G_t = \*S_t^\top\*X_t$ (unmaterialized).
    \STATE Update diagonal K-FAC statistics:
    \begin{equation*}
        \*v_{s,t} \gets \beta_2 \*v_{s,t-1} + (1-\beta_2) \mathrm{diag}\left(\tfrac{1}{B}\*S_t^\top\*S_t\right),
        \qquad
        \*v_{x,t} \gets \beta_2 \*v_{x,t-1} + (1-\beta_2) \mathrm{diag}\left(\tfrac{1}{B}\*X_t^\top\*X_t\right).
    \end{equation*}
    \STATE Form full step $\bar{\*W}_{t+1} = \*W_t - \eta( (1-\beta_1) \*G_t + \beta_1 \mathcal{M}_{t-1})$ (unmaterialized).
    \STATE Form metric factors $\*D_{\*U} = (\*v_{s,t} + \delta)^{\metricpow}$, $\*D_{\*V} = (\*v_{x,t} + \delta)^{\metricpow}$.
    \STATE Project step with \scaledlorsum:
    \begin{equation*}
      \*U_{t+1}\*V_{t+1}^\top
      = \*W_{t+1}
      \gets
      \scaledlorsum(\bar{\*W}_{t+1}; \*D_{\*U} , \*D_{\*V} , K,\rho).
    \end{equation*}
    \STATE Form full momentum $\bar{\mathcal{M}}_{t} = \beta_1 \mathcal{M}_{t-1} + (1-\beta_1) \*G_t$ (unmaterialized).
    \STATE Update low-rank momentum:
    \begin{equation*}
        \mathcal{M}_t \gets \lorsum(\bar{\mathcal{M}}_{t};\,K,\rho)
    \end{equation*}
\end{algorithmic}
\end{algorithm}

\begin{algorithm}[!ht]
\caption{\scaledlorsum (\lorsum with Kronecker-factored metrics)}
\label{algo:scaled-lorsum}
\begin{algorithmic}[1]
    \STATE \textbf{Input:} low-rank factors and coefficients $\{c_j,\*U_j,\*V_j\}_{j=1}^n$
    warm start $(\*U_1,\*V_1)$, metrics $\*D_{\*U}, \*D_{\*V}$, inner iterations $K$, proximal parameter $\rho$.
    \STATE Initialize $\*U^{(0)}\gets \*U_1$, $\*V^{(0)}\gets \*V_1$.
    \STATE Define $\Delta = \sum_{j=2}^n c_j\*U_j\*V_j^\top$ (so that $\tilde{\*W} = c_1\*U_1\*V_1^\top + \*D_{\*U}^{-1} \Delta \*D_{\*V}^{-1}$).
    \FOR{$k=0,\dots,K-1$}
        \STATE $\*U^{(k+1)}
        = \big(c_1 \*U_1 \*V_1^\top \*D_\*V \*V^{(k)} + \rho \*U_1 + \*D_\*U^{-1} \Delta \*V^{(k)}\big)\big((\*V^{(k)})^\top \*D_\*V \*V^{(k)} + \rho \*I_r\big)^{-1}$.
        \STATE $\*V^{(k+1)} = \big(c_1 \*V_1 \*U_1^\top \*D_\*U \*U^{(k+1)} + \rho \*V_1 + \*D_\*V^{-1} \Delta^\top \*U^{(k+1)}\big)\big((\*U^{(k+1)})^\top \*D_\*U \*U^{(k+1)} + \rho \*I_r\big)^{-1}$.
    \ENDFOR
    \STATE \textbf{Output:} $(\*U^{(K)},\*V^{(K)})$.
\end{algorithmic}
\end{algorithm}

\paragraph{Partially preconditioned low-rank sums.}
Note that, in \scaledlorsum (\cref{algo:scaled-lorsum}),
The low-rank sum is expressed as
$\tilde{\*W} = c_1\*U_1\*V_1^\top + \*D_{\*U}^{-1} \big( \sum_{j=2}^n c_j\*U_j\*V_j^\top \big) \*D_{\*V}^{-1}$,
where the first term is the warm start and the remaining terms are preconditioned by the Kronecker factors
and constitute the effective step $\Delta = \sum_{j=2}^n c_j\*U_j\*V_j^\top$.
In general, there is no reason to choose only one pair of factors such that $\*W_t = \*U_1\*V_1^\top$,
but we maintain this convention as we have not found a useful case where $\*W_t$ itself is represented by multiple terms.
However, this is quite plausible if we are considering a combination of previous iterates or multiple LoRA adapters in the update.

\begin{listing}[!ht]
\caption{\scaledlorsum reference implementation}
\label{lst:scaled-lorsum}
\begin{minted}[fontsize=\footnotesize, linenos, breaklines]{python}
def f_lorsum(
    factors: list[tuple[torch.Tensor, torch.Tensor]],
    coefficients: list[float],
    metrics: tuple[torch.Tensor, torch.Tensor],
    num_iters: int = 1,
    rho: float = 0.0,
) -> tuple[torch.Tensor, torch.Tensor]:
    """
    factors are stored as (V_i^T, U_i) so that W = U @ V.
    metrics are stored as (D_V, D_U).
      - V_i^T: (r_i, d_in)
      - U_i:   (d_out, r_i)
      - D_V:   (d_in,)
      - D_U:   (d_out,)
    """
    assert len(factors) >= 2 and len(factors) == len(coefficients)
    solve = torch.linalg.solve

    Vt, U = factors[0]                    # anchor (V^T, U)
    Vt_k, U_k = Vt.clone(), U.clone()     # warm start
    r = Vt_k.shape[0]
    eye_r = torch.eye(r, device=Vt_k.device, dtype=Vt_k.dtype)

    # Precompute preconditioned step factors (dummy for the anchor)
    D_V, D_U = metrics
    prec_factors = [None] + [
        (Vt * D_V.reciprocal()[None, :], D_U.reciprocal()[:, None] * U)
        for Vt, U in factors[1:]
    ]

    for k in range(int(2 * num_iters)):
        if k % 2 == 0:
            # update V^T
            scaled_U_k = D_U[:, None] * U_k
            rhs = Vt.mul(rho)
            rhs.add_((scaled_U_k.T @ U) @ Vt, alpha=coefficients[0])
            for j in range(1, len(coefficients)):
                prec_Vt_j, _ = prec_factors[j]
                _, U_j = factors[j]
                rhs.add_((U_k.T @ U_j) @ prec_Vt_j, alpha=coefficients[j])
            Vt_k = solve(scaled_U_k.T @ U_k + eye_r.mul(rho), rhs)
        else:
            # update U
            scaled_Vt_k = Vt_k * D_V[None, :]
            rhs = U.mul(rho)
            rhs.add_(U @ (Vt @ scaled_Vt_k.T), alpha=coefficients[0])
            for j in range(1, len(coefficients)):
                _, prec_U_j = prec_factors[j]
                Vt_j, _ = factors[j]
                rhs.add_(prec_U_j @ (Vt_j @ Vt_k.T), alpha=coefficients[j])
            U_k = solve(Vt_k @ scaled_Vt_k.T + eye_r.mul(rho), rhs.T).T

    return Vt_k, U_k
\end{minted}
\end{listing}

\paragraph{Diagonal Kronecker-factored metrics with fractional power}
Scaled \psilora replaces the Euclidean proximal terms in \cref{eq:proximal-subproblem} with Kronecker-structured metrics.
In practice we use diagonal Kronecker factors derived from running averages of second moments,
e.g.\ diagonal K-FAC statistics:
\begin{equation*}
    \*v_{s,t} = \mathrm{diag} \left(\tfrac{1}{B}\*S_t^\top\*S_t\right),
    \qquad
    \*v_{x,t} = \mathrm{diag} \left(\tfrac{1}{B}\*X_t^\top\*X_t\right),
\end{equation*}
where we initialize $\*v_{s,0}=\*1$ and $\*v_{x,0}=\*1$.

In practice, we maintain EMA estimates of $\*v_{s,t}$ and $\*v_{x,t}$ and then form \emph{fractional} metric factors with damping
\begin{equation*}
    \*D_{\*U,t} := (\*v_{s,t}+\delta)^{\metricpow},
    \qquad
    \*D_{\*V,t} := (\*v_{x,t}+\delta)^{\metricpow},
\end{equation*}
and pass $(\*D_{\*U,t},\*D_{\*V,t})$ into the \scaledlorsum routine.

We also considered diagonal Shampoo-style statistics \citep{gupta2018shampoopreconditionedstochastictensor} in the ablation experiments,
\begin{equation*}
    \*v_{s,t}^\text{shampoo} = \mathrm{diag} \left(\tfrac{1}{B}\*S_t^\top (\*X_t\*X_t^\top) \*S_t\right),
    \qquad
    \*v_{x,t}^\text{shampoo} = \mathrm{diag} \left(\tfrac{1}{B}\*X_t^\top (\*S_t\*S_t^\top) \*X_t\right),
\end{equation*}
We have found Shampoo to be marginally better (when $\metricpow = 1/4$) but more expensive to compute.
It requires two additional thin GEMM operations per step to compute $\*X_t\*X_t^\top$ and $\*S_t\*S_t^\top$,
which cost $O(B^2 (d_\mathrm{in} + d_\mathrm{out}))$.
This can be prohibitive when the \emph{effective} batch size $B$ is large,
especially for long sequences in language models, so we decided to use K-FAC to maintain simplicity and efficiency.

\section{Additional experimental details}
\label{app:exp}

\begin{figure}[t]
    \centering
    \includegraphics[width=0.99\linewidth]{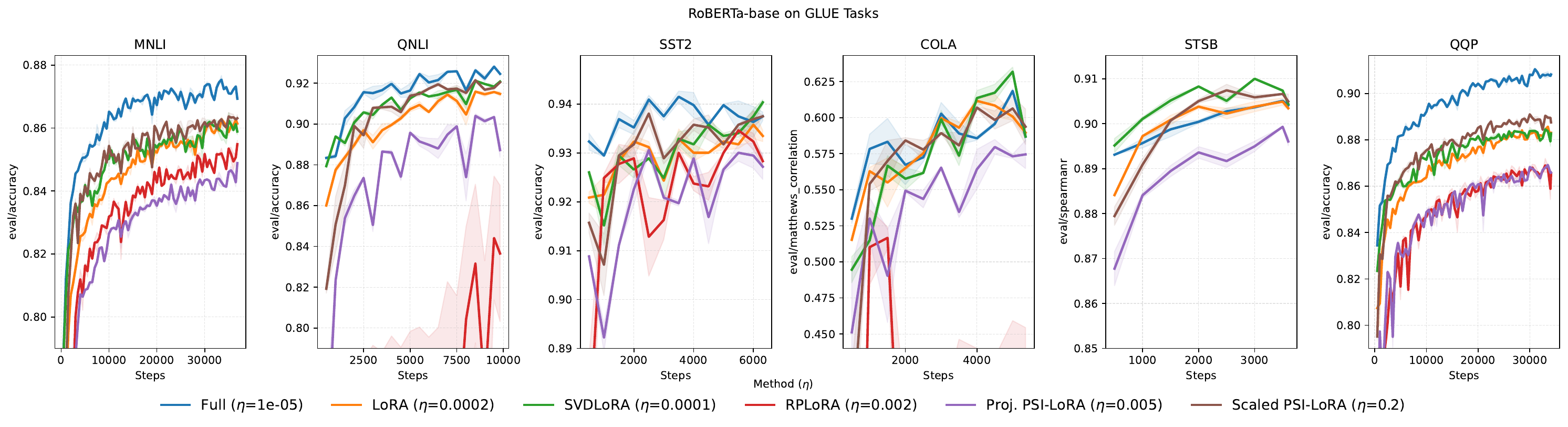}
    \caption{
        Learning curves of validation main metrics for RoBERTa-base on GLUE tasks.
        Legend shows learning rates for each method.
    }
    \label{fig:glue_roberta}
\end{figure}

\subsection{Linear-task momentum ablation}
\label{app:linear-momentum}
We ablate the effect of increasing the rank budget of the low-rank momentum buffer in \cref{fig:linear-momentum}.
We use the same setup as in the linear task experiment in \cref{fig:linear},
with LoRA rank $r=8$ and vary the momentum rank $r_m \in \{r,2r,4r\}$.
We observe that increasing the momentum rank makes \psilora closer to the \svdlora oracle,
which suggests that a higher-rank momentum buffer can better capture the history of updates.
However, this also means that \psilora would plateau faster, as the \svdlora oracle also plateaus.

\paragraph{Linear task tuning notes.}
For LoRA, we typically found the scaling $\eta=0.02$ to work best.
For \svdlora in the stochastic setting, we tuned $\eta$ over the grid $\{1, 2, 5\} \times 10^{i}$ for $i \in \{-2, -1, 0, 1\}$ and momentum $\alpha \in \{\dots, 0.7, 0.75, 0.8, 0.85, 0.9, 0.95\}$,
and found $\eta=0.1$ and $\alpha=0.75$ to perform best.

\begin{figure}[!ht]
    \centering
    \includegraphics[width=0.5\linewidth]{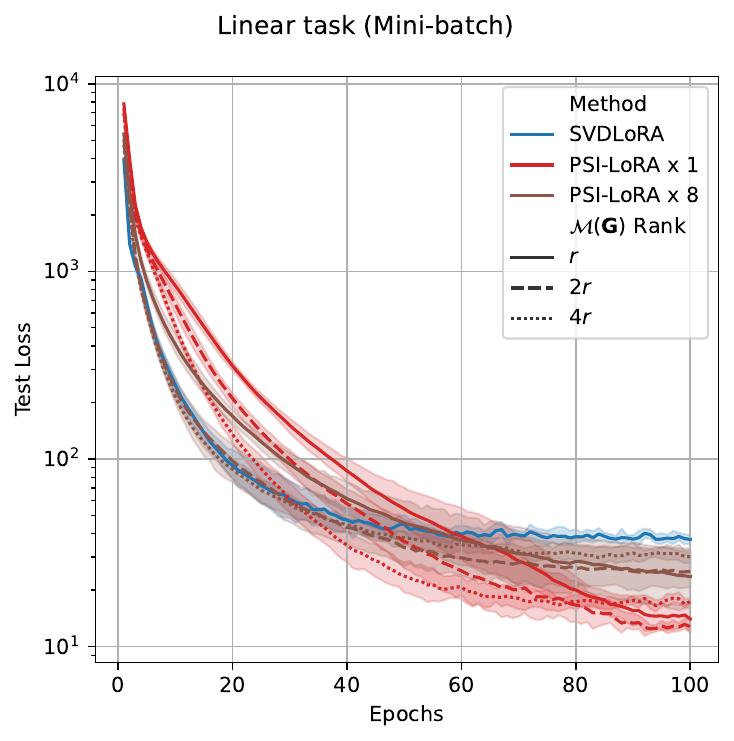}
    \caption{
    Stochastic linear task: increasing the rank budget of the low-rank momentum buffer improves the approximation quality
    and moves \psilora closer to the \svdlora oracle.
    }
    \label{fig:linear-momentum}
\end{figure}

\paragraph{GLUE RoBERTa validation curves.}
We show in \cref{fig:glue_roberta} the full validation curves for RoBERTa-base GLUE experiments
corresponding to \cref{tab:glue_roberta_best_metrics} in the main text.
The learning rates are shown in the legend for each method.

\paragraph{Hyperparameters.}
Unless stated otherwise, LoRA rank is $r=8$ (CIFAR-100 uses $r=16$).
We tune learning rates per method on a log grid $\{1e-4, 2e-4, 5e-4, 1e-3, 2e-3, 5e-3, \ldots \}$
where the range depends on the method.
For example, our method typically uses much higher learning rates in $\{0.1, 0.2, 0.5, 1.0 \}$,
whereas LoRA typically uses $\{1e-4, 2e-4, 5e-4, 1e-3\}$, and full training uses $\{1e-4, 2e-5, 5e-5, 1e-4\}$.
However, we tune over 3 learning rates (typically, the larger ones) per method due to compute constraints.
We average over 3 seeds and report mean and standard deviation.

\end{document}